\newtheorem{Thm}{Theorem}
\newtheorem{Def}{Definition}
\newtheorem{Ass}{Assumption}
\newtheorem{Rem}{Remark}
\newtheorem{Pro}{Proposition}
\newcommand{\alg}{DP-CSGP}
\begin{document}
\title{DP-CSGP: Differentially Private Stochastic Gradient Push with Compressed Communication}

\author{
Zehan Zhu, Heng Zhao, Yan Huang, Joey Tianyi Zhou, Shouling Ji and Jinming Xu$^\dagger$
\thanks{Z. Zhu, S. Ji, and J. Xu are with Zhejiang University, China. Y. Huang is with KTH Royal Institute of Technology, Sweden. H. Zhao and J.T. Zhou are with the A*STAR Centre for Frontier AI Research (CFAR), Singapore.
}
\thanks{$^\dagger$Correspondence to jimmyxu@zju.edu.cn (Jinming Xu).}
}




\maketitle
\begin{abstract}

In this paper, we propose a \underline{D}ifferentially \underline{P}rivate \underline{S}tochastic \underline{G}radient \underline{P}ush with \underline{C}ompressed communication (termed {\alg}) for decentralized learning over directed graphs. Different from existing works, the proposed algorithm is designed to maintain high model utility while ensuring both rigorous differential privacy (DP) guarantees and efficient communication.
For general non-convex and smooth objective functions, we show that the proposed algorithm achieves a tight utility bound of $\mathcal{O}\left( \sqrt{d\log \left( \frac{1}{\delta} \right)}/(\sqrt{n}J\epsilon) \right)$ ($J$ and $d$ are the number of local samples and the dimension of decision variables, respectively) with $\left(\epsilon, \delta\right)$-DP guarantee for each node, matching that of decentralized counterparts with exact communication.
Extensive experiments on benchmark tasks show that, under the same privacy budget, {\alg} achieves comparable model accuracy with significantly lower communication cost than existing decentralized counterparts with exact communication.

\end{abstract}

\begin{IEEEkeywords}
Decentralized learning, differential privacy, communication compression.
\end{IEEEkeywords}

\section{Introduction}
\IEEEPARstart{D}{istributed} learning has been widely adopted in various application domains due to its great potential in improving computing efficiency~\cite{langer2020distributed}.
In particular, we assume that each computing node has $J$ data samples, and we use $f_i(x; j)$ to denote the loss of the $j$-th data sample at node $i$ with respect to the model parameter $x\in \mathbb{R}^d$. We are then interested in solving the following non-convex finite-sum optimization problem via a group of $n$ nodes:
\begin{equation}
\label{global_loss_function}
\underset{x\in \mathbb{R}^d}{\min}f\left( x \right) \triangleq \frac{1}{n}\sum_{i=1}^n{f_i\left( x \right)},
\end{equation}
where $f_i\left( x \right) \triangleq \frac{1}{J}\sum_{j=1}^J{f_i\left( x;j \right)}$ is the loss function of node $i$ and all nodes collaborate to find a common model parameter $x$ minimizing their average loss functions.
We also assume that each node $i$ can only evaluate local stochastic gradient $\nabla f_i\left( x;\xi_i \right)$, $\xi_i \in \{1,2,...,J\}$.

For distributed parallel methods where there is a central coordinator such as a parameter server~\cite{li2014scaling,mcmahan2017communication}, they suffer from high communication overhead at the central server and expose the system to a single point of failure~\cite{lian2017can}. These limitations have stimulated growing interest in fully decentralized approaches~\cite{lian2017can,lian2018asynchronous,zhu2024r} for addressing Problem~\eqref{global_loss_function}, where no central node is required and each node communicates only with its neighbors. The existing decentralized learning algorithms usually employ undirected graphs for communication, which are difficult to implement in practice due to the existence of deadlocks~\cite{assran2019stochastic}. This motivates the need to consider more realistic scenarios in which communication links may be directed. Stochastic Gradient Push (SGP)~\cite{assran2019stochastic}, built upon the push-sum protocol~\cite{kempe2003gossip}, has been shown to be very effective for solving Problem~\eqref{global_loss_function} over directed communication networks.

With the continuous growth in the scale of modern deep learning models, the volume of data transmitted by each node at each iteration also increases. Under bandwidth-limited or heterogeneous network conditions, such high communication overhead becomes a critical bottleneck that restricts the efficiency of decentralized learning. The need for communication efficiency has motivated the development of communication compression techniques, which can significantly reduce the communication burden per iteration by transferring compressed information. Consequently, a series of recent works have explored decentralized learning methods that incorporate communication compression mechanism, including but not limited to~\cite{koloskova2019decentralized,singh2021squarm,zhao2022beer,huang2024cedas,yan2023compressed}.

In addition, it has been well known that the frequent exchange of model parameters in decentralized learning may lead to severe concern on privacy leakage as the disclose of intermediate parameters could potentially compromise the original data~\cite{wang2019beyond}. For instance, previous studies~\cite{truex2019hybrid,carlini2019secret} have shown that the exposed parameters can be utilized to crack original data samples. To ensure the training process does not accidentally leak private information, differential privacy (DP), as a theoretical tool to provide rigorous privacy guarantees and quantify privacy loss, have been widely integrated into training algorithms. A notable example is Abadi \textit{et al.}~\cite{abadi2016deep}, which developed a differentially private stochastic gradient descent (SGD) algorithm DP-SGD in the centralized (single-node) setting. Further, several differentially private algorithms~\cite{zhou2023optimizing, wei2021user, truex2020ldp, lowy2023private} are proposed for distributed (n-node) setting suitable for server-client architecture. More recently, DP has been incorporated into each node in fully decentralized learning systems to enhance the privacy protection~\cite{yu2021decentralized, xu2021dp, zhu2024privsgp}.

In this paper, we aim to develop a differentially private decentralized learning method with communication compression, to ensure both privacy guarantee and communication efficiency. Our main contributions are summarized as follows:
\begin{itemize}
\item \textbf{Communication-efficient algorithm with DP guarantee for each node.} Different from the existing works, we propose a communication-efficient differentially private decentralized learning method (termed {\alg}), which can work over general directed communication topologies in fully decentralized settings. The proposed {\alg} can ensure $(\epsilon,\delta)$-DP guarantee for each node, and significantly reduce the communication overhead per iteration thanks to the introduced error-feedback-based communication compression mechanism.

\item \textbf{Tight utility bound.} 
Given certain privacy budget $(\epsilon, \delta)$ for each node, we establish a tight utility bound of $\mathcal{O}\left( \sqrt{d\log \left( \frac{1}{\delta} \right)}/(\sqrt{n}J\epsilon) \right)$ for {\alg} under mild assumptions. The derived utility bound matches that of existing differentially private decentralized learning methods with exact communication, showing that our {\alg} maintains strong model utility while ensuring both DP guarantee and communication efficiency. To our knowledge, this is the first provably tight model utility guarantee for communication-compressed and differentially private decentralized learning.

\item \textbf{Extensive experimental evaluations.} 
We conduct extensive experiments on two non-convex training tasks in fully decentralized settings, to evaluate the performance of the proposed {\alg}. The experimental results show that, under the same privacy budget, our {\alg} achieves comparable final model accuracy with significantly lower communication cost than existing decentralized counterparts with exact communication, which validates the communication efficiency of {\alg}.
\end{itemize}

\section{Related Works}
\subsection{Communication-compressed decentralized learning}
To improve communication efficiency, gradient compression techniques have been incorporated into distributed learning with server-client architecture in~\cite{de2015taming,alistarh2017qsgd}, where client nodes send compressed gradients to a central server for aggregation. However, the large variance of compressed gradients leads to the severe degradation of convergence rate and final model accuracy. Seide \textit{et al.}~\cite{seide20141} first introduce an error-feedback mechanism to compensate for the variance incurred by compression, achieving notable improvements in both theoretical guarantees and empirical performance. Similar mechanisms are adopted in~\cite{stich2018sparsified, alistarh2018convergence, mishchenko2024distributed, li2020acceleration, gorbunov2021marina, li2021canita} to enhance the convergence for server-client distributed learning methods. Furthermore, the authors in \cite{richtarik2021ef21, fatkhullin2021ef21} formalize the error-feedback mechanism and establish sublinear convergence rates for general non-convex and smooth objectives. 

Recently, the works in~\cite{tang2018communication, koloskova2019decentralized, koloskova2020decentralized, singh2021squarm, liao2023linearly, zhao2024faster} have extended communication compression schemes to decentralized settings, and establish the convergence rates respectively. Tang \textit{et al.}~\cite{tang2018communication} propose a decentralized learning method with communication compression and provide theoretical convergence guarantees for general non-convex objectives. However, their results are limited to unbiased compressors, restricting the method's practical applicability. Koloskova \textit{et al.}~\cite{koloskova2019decentralized} incorporate the error-feedback–based communication compression technique into decentralized stochastic optimization, resulting in the CHOCO-SGD algorithm, which supports biased compressors. For strongly convex objectives, the authors therein establish the method's convergence to the optimal solution at a rate of $\mathcal{O}\left(\frac{1}{nK}\right)$ under diminishing step sizes. Furthermore, they analyze the performance of CHOCO-SGD under general non-convex objectives~\cite{koloskova2020decentralized} and obtain a sublinear convergence rate of $\mathcal{O}\left(\frac{1}{\sqrt{nK}}\right)$, which matches that of the decentralized learning algorithm with exact communication~\cite{lian2017can}. In addition, Singh \textit{et al.}~\cite{singh2021squarm} develop a communication-efficient decentralized momentum method by combining communication compression with momentum acceleration strategy, and provide the convergence guarantees. Zhao \textit{et al.}~\cite{zhao2022beer} further apply communication compression to decentralized stochastic gradient tracking method, yielding the BEER algorithm, which enhances communication efficiency while mitigating the impact of data heterogeneity on convergence performance.

\subsection{Differentially private decentralized learning}
Differential privacy was first proposed in~\cite{dwork2006our} to protect data privacy for database queries, by adding randomly generated zero-mean noises to the output of a query function before it is exposed. Given the remarkable success of machine learning, there has been a recent surge in research efforts towards achieving DP guarantees in machine learning systems. DP guarantee is initially integrated into a centralized (single-node) setting for designing differentially private stochastic learning algorithms~\cite{abadi2016deep,wang2017differentially,iyengar2019towards,chen2020understanding,wang2020differentially}, and a baseline utility bound of $\mathcal{O}\left( \sqrt{d\log \left( \frac{1}{\delta} \right)}/\left( J\epsilon \right ) \right)$ for general non-convex problems is established~\cite{abadi2016deep}. Further, DP guarantee is considered in distributed learning with a server-client structure~\cite{mcmahan2017learning,li2019asynchronous,wang2019efficient,wu2020value,wei2020federated,zeng2021differentially,wei2021user,li2022soteriafl,liu2022loss,zhou2023optimizing,wei2023securing}, and a tight utility bound of $\mathcal{O}\left( \sqrt{d\log \left( \frac{1}{\delta} \right)}/(\sqrt{n}J\epsilon) \right)$ for general non-convex problems is provided~\cite{lowy2023private,zhou2023optimizing}, which scales as $1/\sqrt{n}$ w.r.t. the number of nodes $n$.

Recently, there have been a few works aiming to achieve differential privacy for fully decentralized learning algorithms. For example, the works in~\cite{cheng2018leasgd,cheng2019towards} achieve differential privacy in fully decentralized learning systems for strongly convex problems. Wang \textit{et al.}~\cite{wang2022tailoring} achieve differential privacy in fully decentralized architectures by tailoring gradient methods for deterministic optimization problems. Yu \textit{et al.}~\cite{yu2021decentralized} present a decentralized stochastic learning method for non-convex problems with DP guarantee (DP$^2$SGD) based on D-PSGD~\cite{lian2017can}, while providing no theoretical utility guarantee under a given privacy budget. Xu \textit{et al.}~\cite{xu2021dp} propose a differentially private asynchronous decentralized learning algorithm (A(DP)$^{2}$SGD) for non-convex problems based on AD-PSGD~\cite{lian2018asynchronous}, which provides privacy guarantee in the sense of R\'enyi differential privacy (RDP)~\cite{mironov2017renyi}. However, the utility bound established therein~\cite{xu2021dp} can not match that of the server-client distributed counterparts, losing a scaling factor of $1/\sqrt{n}$. It should be noted that the above-mentioned two fully decentralized differentially private algorithms~\cite{yu2021decentralized,xu2021dp} work only for undirected communication graphs, which is often not satisfied in practical scenarios. More Recently, the authors in~\cite{zhu2024privsgp} develop a differentially private decentralized algorithm over directed graphs, and establish a tight utility bound of $\mathcal{O}\left( \sqrt{d\log \left( \frac{1}{\delta} \right)}/(\sqrt{n}J\epsilon) \right)$ for their method.

It is worth noting that all of the above mentioned differentially private decentralized learning algorithms employ exact communication, resulting in low communication efficiency. There are several recent works that have explored differentially private distributed learning with communication compression, such as~\cite{agarwal2018cpsgd, li2022soteriafl, bao2023communication}, which are, however, limited to server-client architectures. To this end, we aim to develop a differentially private and communication-compressed learning algorithm for fully decentralized settings, while, more importantly, preserving strong model utility guarantee.

\section{Algorithm Development}
We consider solving Problem~\eqref{global_loss_function} over the following general network model.

\textbf{Network Model.}
The communication topology considered in this work is modeled as a directed graph $\mathcal{G}=\left( \mathcal{V},\mathcal{E} \right)$, where $\mathcal{V}=\{1,2,...,n\}$ denotes the set of nodes and $\mathcal{E} \subset \mathcal{V} \times \mathcal{V}$ denotes the set of directed edges/links of the graph. 
We associate the graph $\mathcal{G}$ with a non-negative mixing matrix $A=\left[ a_{ij} \right]  \in \mathbb{R}^{n \times n}$ such that $(j,i) \in \mathcal{E}$ if $a_{i,j} > 0$, i.e., there is a link from node $j$ to node $i$.
Without loss of generality, we assume that each node is an in-neighbor of itself. The sets of in-neighbors and out-neighbors of node $i$ are defined as:
\begin{equation*}
\begin{aligned}
\mathcal{N}_{i}^{in}:=\left\{ j\left| \left( j,i \right) \in \mathcal{E} \right. \right\} \cup \left\{ i \right\} ,
\\
\mathcal{N}_{i}^{out}:=\left\{ j\left| \left( i,j \right) \in \mathcal{E} \right. \right\} \cup \left\{ i \right\} .
\end{aligned}
\end{equation*}

The following assumptions are made on the mixing matrix and graph for the above network model to facilitate the subsequent utility analysis for the proposed algorithm.

\begin{Ass}[Graph Connectivity]
\label{assumption_mixing_matrix}
We assume that the graph $\mathcal{G}$ is strongly connected.
\end{Ass}

\begin{Ass}[Mixing Matrix]
\label{Ass_weight_matrix}
The non-negative mixing matrix $A$ is column-stochastic, i.e., $\mathbf{1}^\top A=\mathbf{1}^\top$, where $\mathbf{1}$ is a column vector with all of its elements equal to $1$.
\end{Ass}

Given the above assumptions, we next state a key result
from~\cite{nedic2014distributed} which will be useful in our analysis.

\begin{Pro}
\label{Pro}
Suppose Assumptions~\ref{assumption_mixing_matrix} and~\ref{Ass_weight_matrix} hold. Then, there exist a stochastic vector $\phi \in \mathbb{R}^n$, and constants $0 < \lambda < 1$ and $C > 0$ such that, for all $k \geqslant 0$, we have
\begin{equation}
\label{network_convergence}
\left\| A^k-\phi \mathbf{1}^\top \right\| \leqslant C\lambda ^k.
\end{equation}
Moreover, there exists a constant $\delta > 0$ such that, for all $i \in \{1,2,...,n\}$ and $k \geqslant 1$, we have
\begin{equation}
\left[ A^k \mathbf{1} \right] _i \geqslant \beta .
\end{equation}
\end{Pro}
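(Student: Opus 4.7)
The plan is to establish the two claims via standard Perron-Frobenius and primitive-matrix machinery, noting that Assumptions~\ref{assumption_mixing_matrix} and~\ref{Ass_weight_matrix} combined with the stipulated self-loops make $A$ a primitive column-stochastic matrix. The geometric contraction in \eqref{network_convergence} then falls out of the spectral decomposition of $A$, and the lower bound on $[A^k \mathbf{1}]_i$ follows by combining that contraction with strict positivity of the Perron right-eigenvector and a short finite-$k$ argument using the self-loops.

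For the contraction bound, I would first identify $\phi$ as the (unique, up to scaling) non-negative right eigenvector of $A$ associated with the eigenvalue $1$, normalized so that $\mathbf{1}^\top \phi = 1$. Column-stochasticity gives $\mathbf{1}^\top A = \mathbf{1}^\top$, so $1$ is an eigenvalue with left eigenvector $\mathbf{1}$; strong connectivity together with the self-loops makes $A$ primitive, hence $1$ is simple and all remaining eigenvalues lie strictly inside the unit disk. Writing $P := \phi \mathbf{1}^\top$, one verifies $PA = AP = P$ and $P^2 = P$ (using $\mathbf{1}^\top A = \mathbf{1}^\top$, $A\phi = \phi$ and $\mathbf{1}^\top \phi = 1$), so $P$ is the spectral projector onto the Perron eigenspace. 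Consequently $A^k - P = (A - P)^k$, and the spectral radius of $A - P$ equals $\rho_2 := \max\{|\mu| : \mu$ is an eigenvalue of $A$ distinct from $1\}$, which is strictly less than $1$ by primitivity. Picking any $\lambda \in (\rho_2, 1)$, Gelfand's formula applied to $A - P$ produces a finite constant $C$ with $\|A^k - \phi \mathbf{1}^\top\| \leq C\lambda^k$ for every $k \geq 0$.

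For the lower bound $[A^k \mathbf{1}]_i \geq \beta$ (the excerpt introduces the constant as $\delta$ in the quantifier and then writes $\beta$ in the conclusion, which I read as a typo), the contraction just proved yields $A^k \mathbf{1} \to P \mathbf{1} = n\phi$. Since the Perron vector of a primitive non-negative matrix is strictly positive, $\phi_i > 0$ for every $i$, so there exists a finite $K_0$ with $[A^k \mathbf{1}]_i \geq \tfrac{n}{2} \min_j \phi_j > 0$ for all $k \geq K_0$ and all $i$. For the remaining finite range $1 \leq k < K_0$, the self-loop at node $i$ (with $a_{ii} > 0$) gives $[A^k]_{ii} \geq a_{ii}^k > 0$ by counting the all-self-loop walk, hence $[A^k \mathbf{1}]_i \geq [A^k]_{ii} > 0$. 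Taking $\beta$ to be the minimum of these finitely many strictly positive quantities produces the desired uniform lower bound.

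The main obstacle is really just keeping track of primitivity: one must ensure that the assumption ``each node is an in-neighbor of itself'' actually yields $a_{ii} > 0$, which upgrades the mere irreducibility of $A$ coming from strong connectivity to primitivity and licenses the spectral gap used above. Everything else is routine linear algebra, and, as the excerpt points out, the statement is classical and adapted directly from the Nedi\'c-Olshevsky analysis of column-stochastic gossip matrices.
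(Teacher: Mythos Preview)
Your argument is correct and is precisely the standard Perron--Frobenius route one would expect. Note, however, that the paper does not actually prove Proposition~\ref{Pro}: it is stated as a known result imported from \cite{nedic2014distributed}, so there is no in-paper proof to compare against. Your proposal supplies exactly the kind of justification that reference provides, and you have also correctly flagged the one subtlety---that positivity of the diagonal entries $a_{ii}>0$ (not just the graph-level self-loop) is what upgrades irreducibility to primitivity---as well as the $\delta$/$\beta$ typo in the statement.
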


Note that the column-stochastic property of the mixing matrix is considerably weaker than double-stochastic property. Each computing node $i$ can use its own out-degree to form the $i$’th column of mixing matrix. Thus the weight matrix can be constructed in the decentralized setting without each node knowing $n$ or the structure of the graph.

Before developing our proposed algorithm, we briefly introduce the following definition of $(\epsilon, \delta)$-DP~\cite{dwork2014algorithmic}, which is crucial to subsequent analysis.
\begin{Def}[$(\epsilon,\delta)$-DP]
A randomized mechanism $\mathcal{M}$ with domain $\mathcal{D}$ and range $\mathcal{R}$ satisfies $(\epsilon,\delta)$-differential privacy, or $(\epsilon,\delta)$-DP for short, if for any two adjacent inputs $\mathrm{x},\mathrm{x}^{\prime}\in \mathcal{D}$ differing on a single entry and for any subset of outputs $S\subseteq \mathcal{R}$, it holds that
\begin{equation}
Pr\left[ \mathcal{M}\left( \mathrm{x} \right) \in S \right] \leqslant e^{\epsilon}Pr\left[ \mathcal{M}\left( \mathrm{x}^{\prime} \right) \in S \right] +\delta ,
\end{equation}
where the privacy budget $\epsilon$ denotes the privacy lower bound to measure a randomized query and $\delta$ is the probability of
breaking this bound. The smaller the values of $\epsilon$ and $\delta$ are, the higher the level of privacy guarantee will be.
\end{Def}

\begin{algorithm}[t]
\caption{ {\alg} }
\label{DP-QSGP}
\textbf{Initialization}: $x_{i}^{1}=\hat{x}_{i}^{1}=\mathbf{0}^d$, $y_i^1=1$ and privacy budget $(\epsilon,\delta)$ for all $i \in \{1,2,...,n\}$, step size $\eta > 0$, and  total number of iterations $T$.
\begin{algorithmic}[1] 
\For{$t=1,2,3,...,T$, at node $i$,}
\State Compression:  $q_{i}^{t}=Q\left( x_{i}^{t}-\hat{x}_{i}^{t} \right) $ ;
\State Sends $\left\{q_{i}^{t}, y_{i}^{t} \right\}$ to all out-neighbors $k \in \mathcal{N}_i^{out}$;
\State Receives $\left\{q_{j}^{t}, y_{j}^{t} \right\}$ from all in-neighbors $j \in \mathcal{N}_i^{in}$;
\State Updates $\hat{x}_j^{t+1}$ for all in-neighbors $j \in \mathcal{N}_i^{in}$: 
\begin{equation*}
\hat{x}_j^{t+1} = \hat{x}_j^{t} + q_j^t;
\end{equation*}
\State Generates intermediate model parameter $w_i^{t+1}$ by: 
\begin{equation*}
w_i^{t+1}=x_{i}^{t}-\hat{x}_{i}^{t+1}+\sum_{j\in \mathcal{N}_{i}^{in}}{a_{ij}\hat{x}_{j}^{t+1}};
\end{equation*}
\State Updates $y_i^{t+1}$ by: $y_{i}^{t+1}=\sum_{j\in \mathcal{N}_{i}^{in}}{a_{ij}y_{j}^{t}}$;
\State Updates $z_i^{t+1}$ by:  
$z_{i}^{t+1}=\frac{w_i^{t+1}}{y_{i}^{t+1}}$
\State Randomly samples a local training data $\xi_i^{t+1}$ with the sampling probability $\frac{1}{J}$;
\State Computes stochastic gradient at $z_i^{t+1}$: $\nabla f_i(z_i^{t+1};\xi_i^{t+1})$
\State Draws randomized noise $N_i^{t+1}$ from the Gaussian distribution:  $N_{i}^{t+1}\sim \mathcal{N}\left( 0,\sigma^2\mathbb{I}_d \right)$;
\State Differentially private local SGD to updates $x_i^{t+1}$: 
\begin{equation*}
x_{i}^{t+1}=w_{i}^{t+1}-\eta \cdot \left( \nabla f_i\left( z_{i}^{t+1};\xi _{i}^{t+1} \right) +N_i^{t+1} \right) .
\end{equation*}
\EndFor
\State \textbf{end}
\end{algorithmic}
\end{algorithm}

Now, we present our differentially private decentralized learning algorithm with a communication compression strategy, termed {\alg}, which works over the aforementioned network model. The complete pseudo-code is summarized in Algorithm~\ref{DP-QSGP}.
At a high level, {\alg} is comprised of local SGD and the averaging of neighboring information, following a framework similar to SGP~\cite{assran2019stochastic} which employs the Push-Sum protocol~\cite{kempe2003gossip} to tackle the unblanceness of directed graphs. However, the key distinction lies in \textbf{i)} the employment of error feedback-based communication compression operation (c.f., line $2$-$5$ in Algorithm~\ref{DP-QSGP}) and \textbf{ii)} the injection of DP Gaussian noise before performing local SGD (c.f., line $11$-$12$ in Algorithm~\ref{DP-QSGP}). In particular, each node $i$ maintains five class of variables during the learning process: \textbf{a)} the model parameter $x_i^t$; \textbf{b)} the scalar Push-Sum weight $y_i^t$; \textbf{c)} the de-biased parameter $z_i^t=x_i^t/y_i^t$; \textbf{d)} the auxiliary parameter $\hat{x}_i^t$; and \textbf{e)} the estimates of the value of $x_j^t$ of its in-neighbors $j \in \mathcal{N}_{i}^{in}$, denoted by $\hat{x}_j^t$.

For variables, stochastic gradients and full gradients, we concatenate the row vectors corresponding to each node to form the following matrices:
\begin{equation*}
\begin{aligned}
& X^t:=\left[ x_{1}^{t};x_{2}^{t}\cdot \cdot \cdot x_{n}^{t} \right] \in \mathbb{R}^{n\times d}
\\
& \hat{X}^t:=\left[ \hat{x}_{1}^{t};\hat{x}_{2}^{t}\cdot \cdot \cdot \hat{x}_{n}^{t} \right] \in \mathbb{R}^{n\times d}
\\
& \partial F\left( Z^t;\xi ^t \right) :=\left[ \nabla f_1\left( z_{1}^{t};\xi _{1}^{t} \right) \cdot \cdot \cdot \nabla f_n\left( z_{n}^{t};\xi _{n}^{t} \right) \right] \in \mathbb{R}^{n\times d}
\\
& \partial f\left( Z^t \right) :=\left[ \nabla f_1\left( z_{1}^{t} \right) ;\nabla f_2\left( z_{2}^{t} \right) \cdot \cdot \cdot \nabla f_n\left( z_{n}^{t} \right) \right] \in \mathbb{R}^{n\times d}
\\
& N^t:=\left[ N_{1}^{t};N_{2}^{t};\cdot \cdot \cdot ,N_{n}^{t} \right] \in \mathbb{R}^{n\times d} .
\end{aligned}
\end{equation*}

To this end, the system updates at each iteration $k$ can be written in matrix notation as 
\begin{subequations}
\label{concatenate_ierate}
\begin{align}
& Q^t =Q\left( X^t-\hat{X}^t \right) 
\label{update_1},
\\
& \hat{X}^{t+1} =\hat{X}^t+Q^t
\label{update_2},
\\
& W^{t+1} =X^t+\left( A-I \right) \hat{X}^{t+1}
\label{update_3},
\\
& y^{t+1}  =Ay^t
\label{update_4},
\\
&z_{i}^{t+1}  =\frac{w_{i}^{t+1}}{y_{i}^{t+1}}
\label{update_5},
\\
& X^{t+1} =W^{t+1}-\eta \cdot \left( \partial F\left( Z^{t+1};\xi ^{t+1} \right) +N^{t+1} \right) 
\label{update_6},
\end{align}
\end{subequations}
where $\eta > 0$ is the step size. $N_i^t$ denotes the injected random noise to ensure DP guarantee for node $i$ at iteration $t$, which is drawn from a Gaussian distribution with variance $\sigma^2$. Without loss of generality, we assume that the parameters $x_i$ and $\hat{x}_i$ of each node $i$ are initialized with zero vectors, to simplify the subsequent model utility analysis. 

\begin{Ass}[Initialization]
\label{assumption_initialization}
The parameters $x_i$ and $\hat{x}_i$ are initialized with $\mathbf{0} \in \mathbb{R}^d$ and $y_i^1 = 1$ for all $i \in \{ 1,2,...,n\}$.
\end{Ass}

In addition, we make the following assumption on our used communication compression operator.

\begin{Ass}[Compression Operator]
\label{quantization_operator}
The compression operator $Q: \mathbb{R}^d \longrightarrow \mathbb{R}^d$ satisfies for all $x \in \mathbb{R}^d$:
\begin{equation}
\label{quantization_ratio}
\mathbb{E}\left[ \left\| Q\left( x \right) -x \right\| ^2 \right] \leqslant \omega ^2\left\| x \right\| ^2,
\end{equation}
where $0 \leqslant \omega < 1$ is the non-negative compression coefficient.
\end{Ass}

\begin{Rem}
Compared with the unbiased compression operator used in~\cite{mishchenko2024distributed} and~\cite{alistarh2017qsgd}, the compression operator in~\eqref{quantization_ratio} is not required to be unbiased, i.e., $\mathbb{E}\left[ Q\left( x \right) \right] =x$, and therefore supports a larger class of compression operators. Example operators that satisfy Assumption~\ref{quantization_operator} are detailed in Section~\ref{section_experiment}
\end{Rem}

\section{Theoretical Analysis}
In this section, we provide the privacy and utility guarantees for our proposed {\alg}(Algorithm~\ref{DP-QSGP}). To this end, we denote $\bar{x}^t\triangleq \frac{1}{n}\sum_{i=1}^n{x_{i}^{t}}$ as the average of $x_i^t$ for all nodes, $\left\| \cdot \right\|_2$ as matrix spectral norm, and $\left\| \cdot \right\|$ as Frobenius norm.

We first make the following assumptions on the local objective functions and stochastic gradients of each node, to facilitate the subsequent analysis.

\begin{Ass}[Smoothness]
\label{assumption_smooth_saga}
For each local function $f_i,i\in\mathcal{V}$, there exists a constant $L>0$ such that 
\begin{equation}
\left\| \nabla f_i\left( x \right) -\nabla f_i\left( y \right) \right\| \leqslant L\left\| x-y \right\| , \forall x,y\in \mathbb{R}^d .
\end{equation}
\end{Ass}

\begin{Ass}[Unbiased Gradient]
\label{assumption_unbiased_gradient}
For any model $x\in \mathbb{R}^d$, the stochastic gradient $\nabla f_i\left( x;\xi _i \right), \xi_i \sim  \{1,2,...,J\}$ generated by each node $i$ is unbiased, i.e.,
\begin{equation}
\mathbb{E}\left[ \nabla f_i\left( x;\xi _i \right) \right] =\nabla f_i\left( x \right) .
\end{equation}
\end{Ass}

\begin{Ass}[Bounded Gradient]
\label{assumption_bounded_gradient}
There exists a finite positive constant $b$ such that each per-sample gradient $\nabla f_i\left( x;\xi _i \right)$ is upper-bounded by $G$, i.e., for all $i \in \{1,2,...,n\}$
\begin{equation}
\left\| \nabla f_i\left( x;\xi _i \right) \right\| \leqslant G, \forall x\in \mathbb{R}^d .
\end{equation}
\end{Ass}

Assumption~\ref{assumption_smooth_saga} and~\ref{assumption_unbiased_gradient} are commonly used for the convergence analysis in stochastic optimization, and Assumption~\ref{assumption_bounded_gradient} is also standard for the differential privacy analysis~\cite{bassily2014private, iyengar2019towards, jayaraman2018distributed}.

The following proposition shows that DP guarantee for each node can be achieved by setting the the variance of Gaussian noise $\sigma^2$ properly according to the given certain privacy budget $(\epsilon,\delta)$ and the total number of iterations $T$.

\begin{Pro}[Privacy Guarantee]
\label{lemma_of_sigma}
There exist constants $c_1$ and $c_2$ such that, for any $\epsilon <\frac{c_1T}{J^2}$ and $\delta \in \left( 0,1 \right) $, $(\epsilon,\delta)$-DP can be guaranteed for each node $i$ for {\alg} after $T$ iterations if we set
\begin{equation}
\label{value_of_sigma}
\sigma ^2=\frac{Tc_{2}^{2}G^2\log \left( \frac{1}{\delta} \right)}{J^2\epsilon ^2}.
\end{equation}
\end{Pro}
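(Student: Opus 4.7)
The plan is to follow the classical moments accountant argument of Abadi \textit{et al.}~\cite{abadi2016deep}, adapted to the decentralized-with-compression setting, and reduce the whole analysis to repeated application of the subsampled Gaussian mechanism. First I would identify the only place at which node $i$'s private dataset enters the algorithm: via the stochastic gradient $\nabla f_i(z_i^{t+1};\xi_i^{t+1})$ in line~12 of Algorithm~\ref{DP-QSGP}. All subsequent quantities that node $i$ ever transmits to its out-neighbors ($q_i^t$ and $y_i^t$) are deterministic functions of $\{x_i^s, \hat{x}_i^s\}_{s\le t}$ together with messages received from in-neighbors, so by the post-processing property of DP it suffices to bound the privacy loss of the released noisy gradients $\nabla f_i(z_i^{t+1};\xi_i^{t+1}) + N_i^{t+1}$ over $t=0,\ldots,T-1$.

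Next I would bound the $\ell_2$-sensitivity of the per-iteration query. Since $\xi_i^{t+1}$ is drawn uniformly from $\{1,\ldots,J\}$, node $i$'s query at iteration $t+1$ is a Poisson/uniform subsampled function of its $J$ local samples with subsampling probability $q=1/J$. Under Assumption~\ref{assumption_bounded_gradient}, replacing a single sample changes the gradient by at most $2G$ in norm, so the sensitivity is $\Delta = 2G$. Therefore each iteration is a subsampled Gaussian mechanism with noise scale $\sigma$ applied to a query of sensitivity $2G$ with sampling rate $q=1/J$.

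Then I would invoke the moments accountant (Theorem~1 of~\cite{abadi2016deep}): there exist universal constants $c_1,c_2>0$ such that for any $\epsilon < c_1 q^2 T$, the $T$-fold composition of the subsampled Gaussian mechanism is $(\epsilon,\delta)$-DP provided
\begin{equation*}
\sigma \;\ge\; c_2 \, \frac{\Delta\, q\,\sqrt{T\log(1/\delta)}}{\epsilon}.
\end{equation*}
Plugging in $q=1/J$ and $\Delta = 2G$ (absorbing the factor $2$ into the constant $c_2$) yields exactly the bound $\sigma^2 = Tc_2^2 G^2 \log(1/\delta)/(J^2\epsilon^2)$ and the admissible range $\epsilon < c_1 T/J^2$ stated in Proposition~\ref{lemma_of_sigma}. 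Since the noise $N_i^{t+1}$ is independent across nodes and iterations, the argument applies to each node $i$ separately, establishing per-node $(\epsilon,\delta)$-DP.

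The main subtlety, and the step I expect to require the most care, is the reduction via post-processing: one must check that the information about $\xi_i^{t+1}$ observable by any external adversary (or by any other node) is a function only of the noisy gradients released by node $i$. The push-sum weight $y_i^t$ does not depend on data at all, and the compressed message $q_i^t = Q(x_i^t - \hat{x}_i^t)$ is a deterministic function of past noisy iterates and past received messages; hence the full transcript visible to the adversary is a post-processing of $\{\nabla f_i(z_i^{s};\xi_i^{s}) + N_i^{s}\}_{s\le t}$, and the moments-accountant bound transfers cleanly. The remaining calculation is then purely the standard subsampled-Gaussian composition, which I would not grind out in detail.
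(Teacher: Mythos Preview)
Your proposal is correct and takes essentially the same approach as the paper: both reduce to Theorem~1 of Abadi \textit{et al.}~\cite{abadi2016deep} by observing that each node's data enters only through the subsampled gradient with sampling probability $q=1/J$, so the moments accountant for the subsampled Gaussian mechanism applies directly. The paper's own proof is in fact just a one-sentence pointer to that theorem, so your write-up is considerably more detailed; the only minor inaccuracy is calling $q_i^t$ a ``deterministic'' function of past iterates, since the compressor $Q$ is allowed to be random under Assumption~\ref{quantization_operator}, but this does not affect the post-processing argument because that randomness is data-independent.
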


\begin{proof}
The proof of the above result can be easily adapted from Theorem 1 in~\cite{abadi2016deep} by knowing the fact that the sampling probability is $\frac{1}{J}$ for each node $i$ at each iteration.
\end{proof}

With the above assumptions and proposition, we can further have the following utility result for {\alg} (Algorithm~\ref{DP-QSGP}).

\begin{Thm}[Utility Guarantee]
\label{Theorem_utility} 
Suppose Assumptions~\ref{assumption_mixing_matrix}-\ref{assumption_bounded_gradient} hold and $J\geqslant \frac{c_2\sqrt{d\log \left( \frac{1}{\delta} \right)}\cdot n^{\frac{5}{2}}}{\epsilon}$ for a given privacy budget $(\epsilon,\delta)$.
For all $\omega \leqslant \left[ 10\left( 1+\gamma ^2 \right) \left( 1+\frac{4C^2}{\left( 1-\lambda \right) ^2} \right) \right] ^{-\frac{1}{2}}$, if we set $\eta =1/\left( \frac{J\epsilon}{c_2\sqrt{nd\log \left( \frac{1}{\delta} \right)}}+L \right)$, $T=\frac{J^2\epsilon ^2}{c_{2}^{2}d\log \left( \frac{1}{\delta} \right)}$ and the noise variance $\sigma ^2=\frac{Tc_{2}^{2}G^2\log \left( \frac{1}{\delta} \right)}{J^2\epsilon ^2}$, {\alg} can achieve $(\epsilon,\delta)$-DP guarantee for each node and has the following utility bound
\begin{equation}
\label{main_utility_bound}
\frac{1}{T}\sum_{t=1}^T{\mathbb{E}\left[ \left\| \nabla f\left( \bar{x}^t \right) \right\| ^2 \right]}\leqslant \mathcal{O}\left( \frac{\sqrt{d\log \left( \frac{1}{\delta} \right)}}{\sqrt{n}J\epsilon} \right) ,
\end{equation}
where the definition of $C$ and $\lambda$ can be found in Proposition~\ref{Pro}, and $\gamma \triangleq \left\| A-I \right\| _2$ with $I$ the identity matrix.
\end{Thm}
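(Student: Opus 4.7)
The plan is to mimic the standard three-ingredient recipe for non-convex decentralized SGD with compression and DP noise: (i) a descent inequality along the trajectory of the network average $\bar{x}^t$, (ii) a consensus plus compression error bound that controls how far the local iterates $z_i^{t}$ drift from $\bar{x}^t$, and (iii) a final parameter-tuning step that balances the optimization error against the injected privacy noise. Throughout, I would exploit Assumption~\ref{Ass_weight_matrix} (column-stochasticity) which guarantees that $\bar{x}^{t+1}=\bar{x}^t-(\eta/n)\mathbf{1}^\top[\partial F(Z^{t+1};\xi^{t+1})+N^{t+1}]$, so the compression and push-sum corrections leave the average unchanged; this reduces the dynamics of $\bar{x}^t$ to a perturbed centralized SGD step whose gradient is evaluated at the non-averaged points $z_i^{t+1}$.

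First I would apply $L$-smoothness of $f$ to $\bar{x}^{t+1}$ to obtain a descent inequality of the form
\begin{equation*}
\mathbb{E} f(\bar{x}^{t+1})\le \mathbb{E} f(\bar{x}^t)-\tfrac{\eta}{2}\mathbb{E}\|\nabla f(\bar{x}^t)\|^2+\tfrac{\eta L^2}{2n}\sum_{i=1}^n\mathbb{E}\|z_i^{t+1}-\bar{x}^t\|^2+\tfrac{\eta^2 L}{2n}\bigl(G^2+d\sigma^2\bigr),
\end{equation*}
using Assumption~\ref{assumption_bounded_gradient} to bound the stochastic gradient and the independence and zero-mean property of $N_i^{t+1}$ to absorb the cross terms. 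The dominant difficulty then concentrates in the consensus term $\tfrac{1}{n}\sum_i\mathbb{E}\|z_i^{t+1}-\bar{x}^t\|^2$, since $z_i^{t+1}=w_i^{t+1}/y_i^{t+1}$ is the Push-Sum de-biased iterate rather than the true average.

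The main obstacle will be bounding this consensus error in the presence of both the error-feedback compression dynamics and the Push-Sum normalization over a directed graph. My plan is to build a joint Lyapunov function of the form $V^t\triangleq \mathbb{E}\|X^t-\phi\mathbf{1}^\top X^t\|^2+\alpha\,\mathbb{E}\|X^t-\hat{X}^t\|^2$ for a suitable $\alpha>0$, and show it satisfies a one-step contraction of the form $V^{t+1}\le \rho V^t+\eta^2(G^2+d\sigma^2)\cdot\mathrm{poly}(n,\gamma,C,(1-\lambda)^{-1})$ with $\rho<1$. The contraction on the first summand follows from Proposition~\ref{Pro} (i.e., $\|A^k-\phi\mathbf{1}^\top\|\le C\lambda^k$), while the contraction on the second summand uses Assumption~\ref{quantization_operator} together with Young's inequality; the hypothesis $\omega\le\bigl[10(1+\gamma^2)(1+4C^2/(1-\lambda)^2)\bigr]^{-1/2}$ is precisely what is needed to make the cross terms small enough to secure $\rho<1$. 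A standard argument (e.g.\ the one used for CHOCO-SGD and for PrivSGP) converts this recursion into $\tfrac{1}{T}\sum_t \tfrac{1}{n}\sum_i\mathbb{E}\|z_i^{t+1}-\bar{x}^t\|^2\lesssim \eta^2(G^2+d\sigma^2)$, after also absorbing the Push-Sum bias $\|z_i^{t+1}-\bar{w}^{t+1}/\bar{y}^{t+1}\|$ via the lower bound $[A^k\mathbf{1}]_i\ge \beta$ in Proposition~\ref{Pro}.

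Plugging the consensus bound back into the descent inequality and summing from $t=1$ to $T$ yields
\begin{equation*}
\tfrac{1}{T}\sum_{t=1}^T \mathbb{E}\|\nabla f(\bar{x}^t)\|^2\lesssim \tfrac{\Delta}{\eta T}+\eta L\bigl(G^2+d\sigma^2\bigr)/n+\eta^2 L^2(G^2+d\sigma^2),
\end{equation*}
with $\Delta\triangleq f(\bar{x}^1)-f^\star$. Substituting $\sigma^2$ from Proposition~\ref{lemma_of_sigma}, the noise-amplified variance scales like $d\sigma^2\asymp TG^2 d\log(1/\delta)/(J\epsilon)^2$, so the middle term becomes $\eta L T G^2 d\log(1/\delta)/(nJ^2\epsilon^2)$. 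The chosen $\eta=1/(J\epsilon/(c_2\sqrt{nd\log(1/\delta)})+L)$ and $T=J^2\epsilon^2/(c_2^2 d\log(1/\delta))$ exactly balance the $\Delta/(\eta T)$ and noise terms at order $\sqrt{d\log(1/\delta)}/(\sqrt{n}J\epsilon)$, while the lower bound on $J$ guarantees that $T$ is large enough for the higher-order $\eta^2 L^2$ term and the residual consensus contribution to be of strictly smaller order. Combining with Proposition~\ref{lemma_of_sigma} for the privacy statement delivers the claimed utility bound.
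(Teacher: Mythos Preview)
Your high-level outline (descent on $\bar{x}^t$, a consensus/compression error bound, then parameter tuning) is exactly the scaffolding the paper uses, and your final balancing of $\Delta/(\eta T)$ against the noise term is correct. The only substantive divergence is in how you propose to control the consensus error, and this is where a technical gap appears.

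You plan to show a one-step contraction $V^{t+1}\le\rho V^t+\cdots$ for the Lyapunov function $V^t=\mathbb{E}\|X^t-\phi\mathbf{1}^\top X^t\|^2+\alpha\,\mathbb{E}\|X^t-\hat X^t\|^2$, invoking Proposition~\ref{Pro}. But Proposition~\ref{Pro} only gives $\|A^k-\phi\mathbf{1}^\top\|\le C\lambda^k$; for $k=1$ this is $\|A-\phi\mathbf{1}^\top\|\le C\lambda$, and for a merely column-stochastic $A$ the constant $C$ is typically large, so $C\lambda$ need not be below $1$. In other words, for directed graphs there is in general no one-step Euclidean contraction of $\|X^t-\phi\mathbf{1}^\top X^t\|$, and your Lyapunov argument as stated would fail at that step. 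The paper sidesteps this by \emph{unrolling} the recursion back to $t=1$, using $\|A^s-\phi\mathbf{1}^\top\|\le C\lambda^s$ inside the sum, and then coupling the resulting bound $R^{t+1}\le \frac{4C^2}{1-\lambda}\sum_{s=0}^{t-1}\lambda^s U^{t-s}+\cdots$ with a simple one-step recursion for the compression error $U^{t+1}\le 5\rho(U^t+R^t+\cdots)$; an induction then yields $U^t\le\zeta\eta^2$ uniformly in $t$, which is what feeds back into the descent inequality. If you want to keep a Lyapunov-style argument, you would need either a weighted norm (via the absolute probability sequence) in which $A-\phi\mathbf{1}^\top$ is genuinely contractive, or to work with a multi-step/unrolled recursion as the paper does. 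Apart from this point, your plan is sound and matches the paper's proof.
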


\begin{proof}
Substituting~\eqref{update_3} into~\eqref{update_6}, we have
\begin{equation*}
X^{t+1}=X^t+\left( A-I \right) \hat{X}^{t+1}-\eta \cdot \left( \partial F\left( Z^{t+1};\xi ^{t+1} \right) +N^{t+1} \right) .
\end{equation*}
Left multiplying $\mathbf{1}^{\top}$ on the both sides, and using that $\mathbf{1}^{\top}A=\mathbf{1}^{\top}$ according to Assumption~\ref{Ass_weight_matrix}, we have
\begin{equation*}
\mathbf{1}^{\top}X^{t+1}=\,\,\mathbf{1}^{\top}X^t-\eta \cdot \,\,\mathbf{1}^{\top}\left( \partial F\left( Z^{t+1};\xi ^{t+1} \right) +N^{t+1} \right) .
\end{equation*}
Dividing by $n$ both sides and denoting $\bar{x}^t\triangleq \frac{\mathbf{1}^{\top}X^t}{n}=\frac{1}{n}\sum_{i=1}^n{x_{i}^{t}}$, we have the update for the average iterate:
\begin{equation}
\label{average_iterate}
\bar{x}^{t+1}=\bar{x}^t-\eta \cdot \frac{1}{n}\left( \sum_{i=1}^n{\nabla f_i\left( z_{i}^{t+1};\xi _{i}^{t+1} \right)}+\sum_{i=1}^n{N_{i}^{t+1}} \right) .
\end{equation}
Using the $L$-smoothness of the global objective function which is implied by Assumption~\ref{assumption_smooth_saga}, we have
\begin{equation}
\begin{aligned}
& f\left( \bar{x}^{t+1} \right) 
\\
\leqslant & f\left( \bar{x}^t \right) +\left< \nabla f\left( \bar{x}^t \right) ,\bar{x}^{t+1}-\bar{x}^t \right> +\frac{L}{2}\left\| \bar{x}^{t+1}-\bar{x}^t \right\| ^2
\\
=& f\left( \bar{x}^t \right) -\eta \left< \nabla f\left( \bar{x}^t \right) ,\frac{1}{n}\sum_{i=1}^n{\left( \nabla f_i\left( z_{i}^{t+1};\xi _{i}^{t+1} \right) +N_{i}^{t+1} \right)} \right> 
\\
& +\frac{\eta ^2L}{2}\left\| \frac{1}{n}\sum_{i=1}^n{\nabla f_i\left( z_{i}^{t+1};\xi _{i}^{t+1} \right)}+\frac{1}{n}\sum_{i=1}^n{N_{i}^{t+1}} \right\| ^2
\\
=& f\left( \bar{x}^t \right) -\eta \left< \nabla f\left( \bar{x}^t \right) ,\frac{1}{n}\sum_{i=1}^n{\left( \nabla f_i\left( z_{i}^{t+1};\xi _{i}^{t+1} \right) +N_{i}^{t+1} \right)} \right> 
\\
& +\frac{\eta ^2L}{2}\left\| \frac{1}{n}\sum_{i=1}^n{\nabla f_i\left( z_{i}^{t+1};\xi _{i}^{t+1} \right)} \right\| ^2+\frac{\eta ^2L}{2}\left\| \frac{1}{n}\sum_{i=1}^n{N_{i}^{t+1}} \right\| ^2
\\
& +\eta ^2L\left< \frac{1}{n}\sum_{i=1}^n{\nabla f_i\left( z_{i}^{t+1};\xi _{i}^{t+1} \right)},\frac{1}{n}\sum_{i=1}^n{N_{i}^{t+1}} \right> .
\end{aligned}
\end{equation}
Taking the expectation of both sides, we obtain
\begin{equation}
\label{descent_lemma}
\begin{aligned}
& \mathbb{E}\left[ f\left( \bar{x}^{t+1} \right) \right] 
\\
\leqslant & \mathbb{E}\left[ f\left( \bar{x}^t \right) \right] -\eta \mathbb{E}\left[ \left< \nabla f\left( \bar{x}^t \right) ,\frac{1}{n}\sum_{i=1}^n{\nabla f_i\left( z_{i}^{t+1} \right)} \right> \right] 
\\
& +\frac{\eta ^2L}{2}\mathbb{E}\left[ \left\| \frac{1}{n}\sum_{i=1}^n{\nabla f_i\left( z_{i}^{t+1};\xi _{i}^{t+1} \right)} \right\| ^2 \right] 
\\
& +\frac{\eta ^2L}{2}\mathbb{E}\left[ \left\| \frac{1}{n}\sum_{i=1}^n{N_{i}^{t+1}} \right\| ^2 \right] .
\end{aligned}
\end{equation}
For the second term in the RHS of~\eqref{descent_lemma}, using the fact $\left< a,b \right> =\frac{1}{2}\left\| a \right\| ^2+\frac{1}{2}\left\| b \right\| ^2-\frac{1}{2}\left\| a-b \right\| ^2$, we have
\begin{equation}
\label{eq_9}
\begin{aligned}
& \left< \nabla f\left( \bar{x}^t \right) ,\frac{1}{n}\sum_{i=1}^n{\nabla f_i\left( z_{i}^{t+1} \right)} \right> 
\\
= & \frac{1}{2}\left\| \nabla f\left( \bar{x}^t \right) \right\| ^2+\frac{1}{2}\left\| \frac{1}{n}\sum_{i=1}^n{\nabla f_i\left( z_{i}^{t+1} \right)} \right\| ^2
\\
& -\frac{1}{2}\left\| \frac{1}{n}\sum_{i=1}^n{\nabla f_i\left( z_{i}^{t+1} \right)}-\nabla f\left( \bar{x}^t \right) \right\| ^2 .
\end{aligned}
\end{equation}
The last term in the above inequality can be bounded by
\begin{equation}
\label{eq_10}
\begin{aligned}
& \left\| \frac{1}{n}\sum_{i=1}^n{\nabla f_i\left( z_{i}^{t+1} \right)}-\nabla f\left( \bar{x}^t \right) \right\| ^2
\\
= & \left\| \frac{1}{n}\sum_{i=1}^n{\left( \nabla f_i\left( z_{i}^{t+1} \right) -\nabla f_i\left( \bar{x}^t \right) \right)} \right\| ^2
\\
\leqslant & \frac{1}{n}\sum_{i=1}^n{\left\| \nabla f_i\left( z_{i}^{t+1} \right) -\nabla f_i\left( \bar{x}^t \right) \right\| ^2}
\\
\overset{\left( a \right)}{\leqslant} & \frac{L^2}{n}\sum_{i=1}^n{\left\| z_{i}^{t+1}-\bar{x}^t \right\| ^2},
\end{aligned}
\end{equation}
where in $(a)$ we used Assumption~\ref{assumption_smooth_saga}. Substituting~\eqref{eq_10} into~\eqref{eq_9}, we have
\begin{equation}
\label{second_term}
\begin{aligned}
& \left< \nabla f\left( \bar{x}^t \right) ,\frac{1}{n}\sum_{i=1}^n{\nabla f_i\left( z_{i}^{t+1} \right)} \right> 
\\
\geqslant & \frac{1}{2}\left\| \nabla f\left( \bar{x}^t \right) \right\| ^2+\frac{1}{2}\left\| \frac{1}{n}\sum_{i=1}^n{\nabla f_i\left( z_{i}^{t+1} \right)} \right\| ^2
\\
& -\frac{L^2}{2n}\sum_{i=1}^n{\left\| z_{i}^{t+1}-\bar{x}^t \right\| ^2}.
\end{aligned}
\end{equation}
For the third term in the RHS of~\eqref{descent_lemma}, we can bound it by
\begin{equation}
\label{third_term}
\begin{aligned}
& \mathbb{E}\left[ \left\| \frac{1}{n}\sum_{i=1}^n{\nabla f_i\left( z_{i}^{t+1};\xi _{i}^{t+1} \right)} \right\| ^2 \right] 
\\
= & \mathbb{E}\left[ \left\| \frac{1}{n}\sum_{i=1}^n{\left[ \nabla f_i\left( z_{i}^{t+1};\xi _{i}^{t+1} \right) -\nabla f_i\left( z_{i}^{t+1} \right) +\nabla f_i\left( z_{i}^{t+1} \right) \right]} \right\| ^2 \right] 
\\
= & \mathbb{E}\left[ \left\| \frac{1}{n}\sum_{i=1}^n{\left( \nabla f_i\left( z_{i}^{t+1};\xi _{i}^{t+1} \right) -\nabla f_i\left( z_{i}^{t+1} \right) \right)} \right\| ^2 \right] 
\\
& +2\mathbb{E}\left[ \left< \frac{1}{n}\sum_{i=1}^n{\left( \nabla f_i\left( z_{i}^{t+1};\xi _{i}^{t+1} \right) -\nabla f_i\left( z_{i}^{t+1} \right) \right)}, \right. \right. 
\\
& \left. \left. \frac{1}{n}\sum_{i=1}^n{\nabla f_i\left( z_{i}^{t+1} \right)} \right> \right] +\mathbb{E}\left[ \left\| \frac{1}{n}\sum_{i=1}^n{\nabla f_i\left( z_{i}^{t+1} \right)} \right\| ^2 \right] 
\\
= & \frac{1}{n^2}\sum_{i=1}^n{\mathbb{E}\left[ \left\| \nabla f_i\left( z_{i}^{t+1};\xi _{i}^{t+1} \right) -\nabla f_i\left( z_{i}^{t+1} \right) \right\| ^2 \right]}
\\
& +\mathbb{E}\left[ \left\| \frac{1}{n}\sum_{i=1}^n{\nabla f_i\left( z_{i}^{t+1} \right)} \right\| ^2 \right] 
\\
\overset{\left( a \right)}{\leqslant} & \frac{4G^2}{n}+\mathbb{E}\left[ \left\| \frac{1}{n}\sum_{i=1}^n{\nabla f_i\left( z_{i}^{t+1} \right)} \right\| ^2 \right] ,
\end{aligned}
\end{equation}
where in $(a)$ we used $\left\| \nabla f_i\left( z_{i}^{t+1};\xi _{i}^{t+1} \right) \right\| \leqslant G$ according to Assumption~\ref{assumption_bounded_gradient} and $\left\| \nabla f_i\left( z_{i}^{t+1} \right) \right\| \leqslant G$ according to the definition of $f_i \left( \cdot  \right)$.

For the last term in the RHS of~\eqref{descent_lemma}, since Gaussian noises $\{N_i^{t+1}\}_{i=1,2,...,n}$ are independent with each other, we have 
\begin{equation}
\label{forth_term}
\mathbb{E}\left[ \left\| \frac{1}{n}\sum_{i=1}^n{N_{i}^{t+1}} \right\| ^2 \right] =\frac{1}{n^2}\sum_{i=1}^n{\mathbb{E}\left[ \left\| N_{i}^{t+1} \right\| ^2 \right]}=\frac{d\sigma ^2}{n}.
\end{equation}
Substituting~\eqref{second_term},~\eqref{third_term} and~\eqref{forth_term} into~\eqref{descent_lemma}, we have
\begin{equation}
\begin{aligned}
& \mathbb{E}\left[ f\left( \bar{x}^{t+1} \right) \right] 
\\
\leqslant & \mathbb{E}\left[ f\left( \bar{x}^t \right) \right] -\frac{\eta}{2}\mathbb{E}\left[ \left\| \nabla f\left( \bar{x}^t \right) \right\| ^2 \right] 
\\
& -\frac{\eta -\eta ^2L}{2}\mathbb{E}\left[ \left\| \frac{1}{n}\sum_{i=1}^n{\nabla f_i\left( z_{i}^{t+1} \right)} \right\| ^2 \right] 
\\
& +\frac{\eta L^2}{2n}\sum_{i=1}^n{\mathbb{E}\left[ \left\| z_{i}^{t+1}-\bar{x}^t \right\| ^2 \right]}+\frac{\eta ^2L}{2n}\left( 4G^2+d\sigma ^2 \right) .
\end{aligned}
\end{equation}
Substituting the consensus error bound $\mathbb{E}\left[ \left\| z_{i}^{t+1}- \bar{x}^t \right\| ^2 \right]$ (c.f.,~\eqref{consensus_error_bound} in the appendix) into the above inequality, we have
\begin{equation}
\label{eq_11}
\begin{aligned}
\mathbb{E}\left[ f\left( \bar{x}^{t+1} \right) \right] \leqslant & \mathbb{E}\left[ f\left( \bar{x}^t \right) \right] -\frac{\eta}{2}\mathbb{E}\left[ \left\| \nabla f\left( \bar{x}^t \right) \right\| ^2 \right] 
\\
& -\frac{\eta -\eta ^2L}{2}\mathbb{E}\left[ \left\| \frac{1}{n}\sum_{i=1}^n{\nabla f_i\left( z_{i}^{t+1} \right)} \right\| ^2 \right] 
\\
& +\frac{5\eta ^3L^2C^2}{\beta ^2\left( 1-\lambda \right) ^2}\left[ 2n\left( G^2+d\sigma ^2 \right) +\zeta \right] 
\\
& +\frac{\eta ^2L}{2n}\left( 4G^2+d\sigma ^2 \right) .
\end{aligned}
\end{equation}
Summing~\eqref{eq_11} from $t=0$ to $T$, we have
\begin{equation}
\label{eq_12}
\begin{aligned}
& \frac{\eta}{2}\sum_{t=1}^T{\mathbb{E}\left[ \left\| \nabla f\left( \bar{x}^t \right) \right\| ^2 \right]}
\\
& +\frac{\eta \left( 1-\eta L \right)}{2}\sum_{t=1}^T{\mathbb{E}\left[ \left\| \frac{1}{n}\sum_{i=1}^n{\nabla f_i\left( z_{i}^{t+1} \right)} \right\| ^2 \right]}
\\
\leqslant & f\left( \bar{x}^1 \right) -f^*+\frac{5\eta ^3L^2C^2}{\beta ^2\left( 1-\lambda \right) ^2}\left[ 2n\left( G^2+d\sigma ^2 \right) +\zeta \right] T
\\
& +\frac{\eta ^2L}{2n}\left( 4G^2+d\sigma ^2 \right) T .
\end{aligned}
\end{equation}
Multiplying $\frac{2}{\eta T}$ on both sides of~\eqref{eq_12}, we obtain
\begin{equation}
\label{eeq_1}
\begin{aligned}
& \frac{1}{T}\sum_{t=1}^T{\mathbb{E}\left[ \left\| \nabla f\left( \bar{x}^t \right) \right\| ^2 \right]}
\\
& +\frac{1-\eta L}{T}\sum_{t=1}^T{\mathbb{E}\left[ \left\| \frac{1}{n}\sum_{i=1}^n{\nabla f_i\left( z_{i}^{t+1} \right)} \right\| ^2 \right]}
\\
\leqslant & \frac{2\left( f\left( \bar{x}^1 \right) -f^* \right)}{\eta T}+\frac{\eta L}{n}\left( 4G^2+d\sigma ^2 \right) 
\\
& +\frac{10\eta ^2L^2C^2}{\beta ^2\left( 1-\lambda \right) ^2}\left[ 2n\left( G^2+d\sigma ^2 \right) +\zeta \right] .
\end{aligned}
\end{equation}
According to~\eqref{rho_upper_bound} and~\eqref{def_zeta}, we know that
\begin{equation}
\label{zeta_upper_bound}
\zeta \leqslant n\left( G^2+d\sigma ^2 \right) .
\end{equation}
Invoking~\ref{zeta_upper_bound} into~\eqref{eeq_1}, we have
\begin{equation}
\label{eeq_2}
\begin{aligned}
& \frac{1}{T}\sum_{t=1}^T{\mathbb{E}\left[ \left\| \nabla f\left( \bar{x}^t \right) \right\| ^2 \right]}
\\
& +\frac{1-\eta L}{T}\sum_{t=1}^T{\mathbb{E}\left[ \left\| \frac{1}{n}\sum_{i=1}^n{\nabla f_i\left( z_{i}^{t+1} \right)} \right\| ^2 \right]}
\\
\leqslant & \frac{2\left( f\left( \bar{x}^1 \right) -f^* \right)}{\eta T}+\frac{\eta L}{n}\left( 4G^2+d\sigma ^2 \right) 
\\
& +\frac{30n\eta ^2L^2C^2}{\beta ^2\left( 1-\lambda \right) ^2}\left( G^2+d\sigma ^2 \right) .
\end{aligned}
\end{equation}
Letting the step size $\eta =\frac{1}{\sqrt{\frac{T}{n}}+L}$, \eqref{eeq_2} can be relaxed as
\begin{equation}
\label{eeq_3}
\begin{aligned}
& \frac{1}{T}\sum_{t=1}^T{\mathbb{E}\left[ \left\| \nabla f\left( \bar{x}^t \right) \right\| ^2 \right]}
\\
\overset{1-\eta L\geqslant 0}{\leqslant} & \frac{1}{T}\sum_{t=1}^T{\mathbb{E}\left[ \left\| \nabla f\left( \bar{x}^t \right) \right\| ^2 \right]}
\\
& +\frac{1-\eta L}{T}\sum_{t=1}^T{\mathbb{E}\left[ \left\| \frac{1}{n}\sum_{i=1}^n{\nabla f_i\left( z_{i}^{t+1} \right)} \right\| ^2 \right]}
\\
\leqslant & \frac{2\left( f\left( \bar{x}^1 \right) -f^* \right)}{\sqrt{nT}}+\frac{2L\left( f\left( \bar{x}^1 \right) -f^* \right)}{T}
\\
& +\frac{L\left( 4G^2+d\sigma ^2 \right)}{\sqrt{nT}} 
\\
& +\frac{1}{T}\cdot \frac{30n^2L^2C^2}{\beta ^2\left( 1-\lambda \right) ^2}\left( G^2+d\sigma ^2 \right). 
\end{aligned}
\end{equation}
Substituting $\sigma^2$ in~\eqref{value_of_sigma} into~\eqref{eeq_3}, and setting $T=\frac{J^2\epsilon ^2}{c_{2}^{2}d\log \left( \frac{1}{\delta} \right)}$, we can further obtain
\begin{equation}
\label{eeq_5}
\begin{aligned}
& \frac{1}{T}\sum_{t=1}^T{\mathbb{E}\left[ \left\| \nabla f\left( \bar{x}^t \right) \right\| ^2 \right]}
\\
\leqslant & \frac{c_2\sqrt{d\log \left( \frac{1}{\delta} \right)}}{\sqrt{n}J\epsilon}\cdot \left[ 2\left( f\left( \bar{x}^1 \right) -f^* \right) +5LG^2 \right] 
\\
& +\frac{c_{2}^{2}d\log \left( \frac{1}{\delta} \right)}{J^2\epsilon ^2}\cdot \left[ 2L\left( f\left( \bar{x}^1 \right) -f^* \right) \right] 
\\
& +\frac{c_{2}^{2}d\log \left( \frac{1}{\delta} \right)}{J^2\epsilon ^2}\cdot \frac{60n^2L^2C^2G^2}{\beta ^2\left( 1-\lambda \right) ^2}.
\end{aligned}
\end{equation}
Under mild assumption of
\begin{equation}
\label{mild_assumption}
J\geqslant \frac{c_2\sqrt{d\log \left( \frac{1}{\delta} \right)}\cdot n^{\frac{5}{2}}}{\epsilon},
\end{equation}
\eqref{eeq_5} can be reduced as
\begin{equation}
\begin{aligned}
& \frac{1}{T}\sum_{t=1}^T{\mathbb{E}\left[ \left\| \nabla f\left( \bar{x}^t \right) \right\| ^2 \right]}
\\
\leqslant & \frac{c_2\sqrt{d\log \left( \frac{1}{\delta} \right)}}{\sqrt{n}J\epsilon}\cdot \left[ 2\left( f\left( \bar{x}^1 \right) -f^* \right) +5LG^2 \right] 
\\
& +\frac{c_2\sqrt{d\log \left( \frac{1}{\delta} \right)}}{\sqrt{n}J\epsilon}\cdot \left[ 2L\left( f\left( \bar{x}^1 \right) -f^* \right) \right] 
\\
& +\frac{c_2\sqrt{d\log \left( \frac{1}{\delta} \right)}}{\sqrt{n}J\epsilon}\cdot \frac{60L^2C^2G^2}{\beta ^2\left( 1-\lambda \right) ^2}
\\
= & \mathcal{O}\left( \frac{\sqrt{d\log \left( \frac{1}{\delta} \right)}}{\sqrt{n}J\epsilon} \right) ,
\end{aligned}
\end{equation}
which completes the proof.
\end{proof}

\begin{Rem}
{\alg} achieves the same utility guarantee as differentially private learning algorithms with a server-client structure, such as LDP SVRG/SPIDER~\cite{lowy2023private}. Furthermore, {\alg} recovers the baseline utility $\mathcal{O}\left( \sqrt{d\log \left( \frac{1}{\delta} \right)}/J\epsilon \right) $ of the centralized DP-SGD~\cite{abadi2016deep} in the single node case with $n=1$. In addition, the derived utility bound matches that of differentially private decentralized learning methods with exact communication~\cite{zhu2024privsgp}, demonstrating that our {\alg} maintains strong model utility while ensuring both DP guarantee and communication efficiency.
\end{Rem}

\section{Experiments}
\label{section_experiment}
In this section, we conduct extensive experiments to evaluate the performance of the proposed DP-CSGP against differentially private decentralized algorithm DP$^2$SGD with exact communication~\cite{yu2021decentralized}. All experiments are deployed in a high performance computer with Intel Xeon E5-2680 v4 CPU @ 2.40GHz and 8 Nvidia RTX 4090 GPUs, and are implemented with distributed communication package \textit{torch.distributed} in PyTorch, where a process serves as a node, and inter-process communication is used to mimic communication among nodes.

\subsection{Experimental Setup}
We consider two benchmark non-convex learning tasks (i.e., training deep CNN ResNet-18~\cite{he2016deep} on Cifar-10 dataset~\cite{krizhevsky2009learning}, and training shallow 2-layer neural network on Mnist dataset~\cite{deng2012mnist} dataset), in fully decentralized setting composed of 10 nodes. For all experiments, we evenly split the shuffled datasets across 10 nodes, and employ a directed exponential graph as the communication topology. The learning rate is set to be $0.03$ for ResNet-18 training task and $0.01$ for shallow 2-layer neural network training. Privacy parameters $\delta$ is set to be $10^{-4}$, and we test different values for $\epsilon$ which implies different levels of privacy guarantee. It is worthy to point out that, in order to achieve privacy guarantee, bounded gradient (Assumption~\ref{assumption_bounded_gradient}) is required. However, it is not easy to obtain this upper bound $G$ or it is somewhat large especially for neural networks. Therefore, following experiments in previous works~\cite{zhang2020private, lowy2023private, ding2021differentially}, we also apply gradient clipping (i.e. $\mathrm{Clip}_G\left( \nabla f_i\left( z_{i}^{t};\xi _{i}^{t} \right) \right) =\nabla f_i\left( z_{i}^{t};\xi _{i}^{t} \right) \cdot \min \left( 1,G/\left\| \nabla f_i\left( z_{i}^{t};\xi _{i}^{t} \right) \right\| \right) $) in our experiments. In particular, we choose $G=1.5$ for ResNet-18 training and $G=0.5$ for 2-layer neural network training. For the privacy noise $N$, we will set the variances of $N$ according to their theoretical values implied in Proposition~\ref{lemma_of_sigma}. Note that all experimental results are averaged over five repeated runs.

\textbf{Compression Schemes.}
We implement two compression schemes that satisfy Assumption~\ref{quantization_operator} for our proposed {\alg}: $\mathrm{rand_a}$-sparsification and $\mathrm{gsgd_b}$-quantization. The exact definitions of these two schemes are given below.
\begin{itemize}
    \item $\mathrm{rand_a}$-sparsification~\cite{wangni2018gradient}. The $\mathrm{rand_a}: \mathbb{R}^d\rightarrow \mathbb{R}^d$ compression operator (for $0<a<1$) preserves a randomly chosen $a$ fraction of the values of the vector and sets the other ones to zero. Only $32 \lfloor ad \rfloor$ bits are required to send $\mathrm{rand}_a\left( x \right)$ to another node — all the values of all the values of non-zero entries (note that entries are represented as 32 $\mathrm{float} 32$ numbers in $\mathrm{torch.Tensor}$). Receiver can recover positions of these entries if it knows the random seed of uniform sampling operator used to select these entries. This random seed could be communicated once before running the algorithm. This sparsification scheme satisfies Assumption~\ref{quantization_operator} with coefficient $\omega^2 =1-a$. 
    
    \item $\mathrm{gsgd_b}$-quantification~\cite{alistarh2017qsgd}. The $\mathrm{gsgd_b}: \mathbb{R}^d\rightarrow \mathbb{R}^d$ compression operator (for $b>1$) is given as $\mathrm{gsgd}_b\left( x \right) :=\left\| x \right\| \cdot \mathrm{sig}\left( x \right) \cdot 2^{-\left( b-1 \right)}\cdot \lfloor 2^{b-1}\left| x \right|/\left\| x \right\| +u \rfloor $, where $u\thicksim _{u.a.r.}\left[ 0,1 \right] ^d$ is a random dithering vector and $\mathrm{sig}\left( x \right)$ assigns the element-wise sign: $\left[ \mathrm{sig}\left( x \right) \right] _i=1$ if $\left[ x \right] _i\geqslant 0$ and $\left[ \mathrm{sig}\left( x \right) \right] _i=-1$ if $\left[ x \right] _i<0$. As the value in the right bracket will be rounded to an integer in $\left\{ 0,,...,2^{b-1} \right\}$, each coordinate can be encoded with at most $\left( b-1 \right) +1$ bits ($1$ for the sign). This quantization scheme satisfies Assumption~\ref{quantization_operator} with coefficient $\omega ^2=\min \left\{ d/2^{2\left( b-1 \right)},\sqrt{d}/2^{b-1} \right\}$. 
\end{itemize}

\subsection{Experimental Results}
For the 2-layer neural network training task, we consider three levels of privacy with $\epsilon=0.2, 0.3, 0.5$ and a common $\delta=10^{-4}$. For our {\alg}, we test different levels of sparsification operator ($\mathrm{rand}_{50}$, $\mathrm{rand}_{75}$, $\mathrm{rand}_{10}$) and different levels of quantification operator ($\mathrm{gsgd}_{16}$, $\mathrm{gsgd}_{8}$), where the experimental results are reported in Fig.~\ref{Mnist_sparsification} and~\ref{Mnist_quantification} respectively. The experimental results shown in Fig.~\ref{mnist_sparsify_acc_epsilon_0_point_5}~\ref{mnist_sparsify_acc_epsilon_0_point_3}~\ref{mnist_sparsify_acc_epsilon_0_point_2} and Fig.~\ref{mnist_quantify_acc_epsilon_0_point_5}~\ref{mnist_quantify_acc_epsilon_0_point_3}~\ref{mnist_quantify_acc_epsilon_0_point_2} illustrate that, under the same total privacy budget, our {\alg} converges faster than the uncompressed decentralized private counterpart DP$^2$SGD in terms of communication bits, while ultimately achieving comparable model accuracy with less communication resource consumption. This validates that communication compression indeed provide significant savings in terms of communication cost. In addition, it can be observed from Fig.~\ref{mnist_sparsify_acc_epsilons} and~\ref{Mnist_quantification} that, when setting the same total iteration and employing the same level of compression operator, the model performance of {\alg} decreases as the privacy budget $\epsilon$ decreases (implying the level of privacy guarantee being stronger), indicating the inherent trade-offs between model utility and privacy guarantee.

\begin{figure*}[t]
\centering
\subfloat[$(0.5, 10^{-4})$-DP]{
\includegraphics[width=0.24\linewidth]{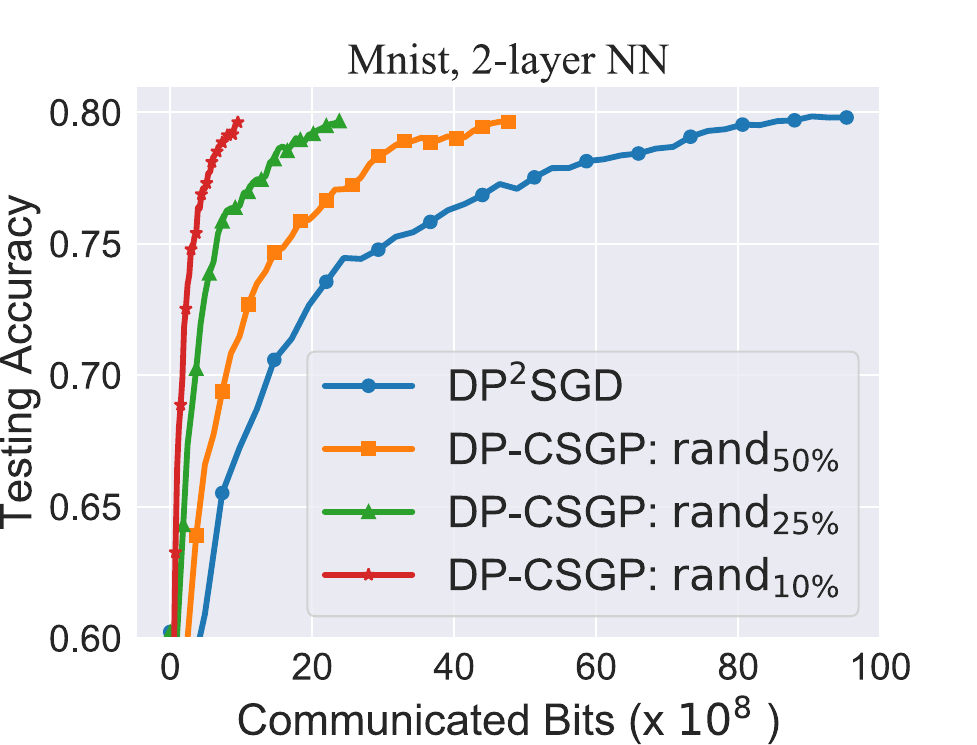}
\label{mnist_sparsify_acc_epsilon_0_point_5}
}
\subfloat[$(0.3, 10^{-4})$-DP]{
\includegraphics[width=0.24\linewidth]{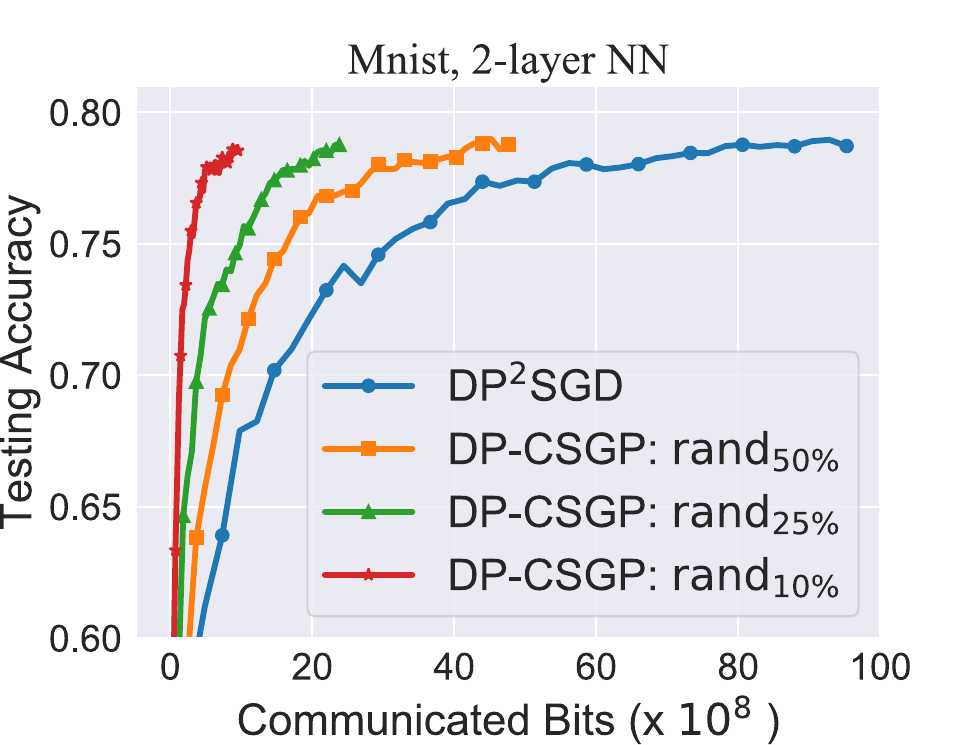}
\label{mnist_sparsify_acc_epsilon_0_point_3}
}
\subfloat[$(0.2, 10^{-4})$-DP]{
\includegraphics[width=0.24\linewidth]{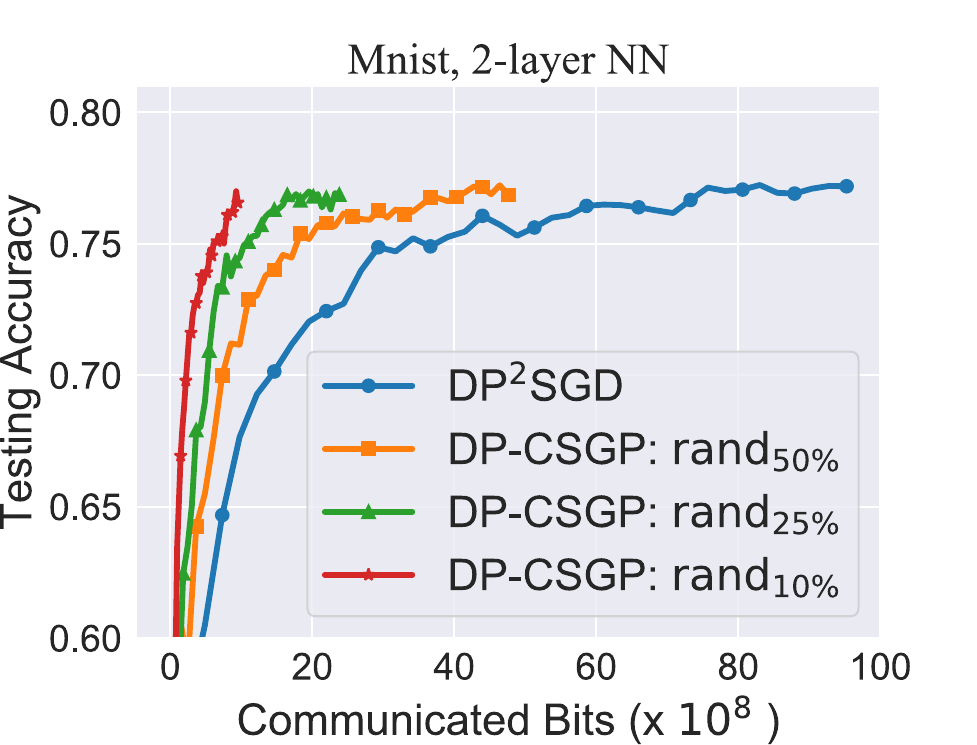}
\label{mnist_sparsify_acc_epsilon_0_point_2}
}
\subfloat[DP-CSGP: rand$_{75\%}$]{
\includegraphics[width=0.24\linewidth]{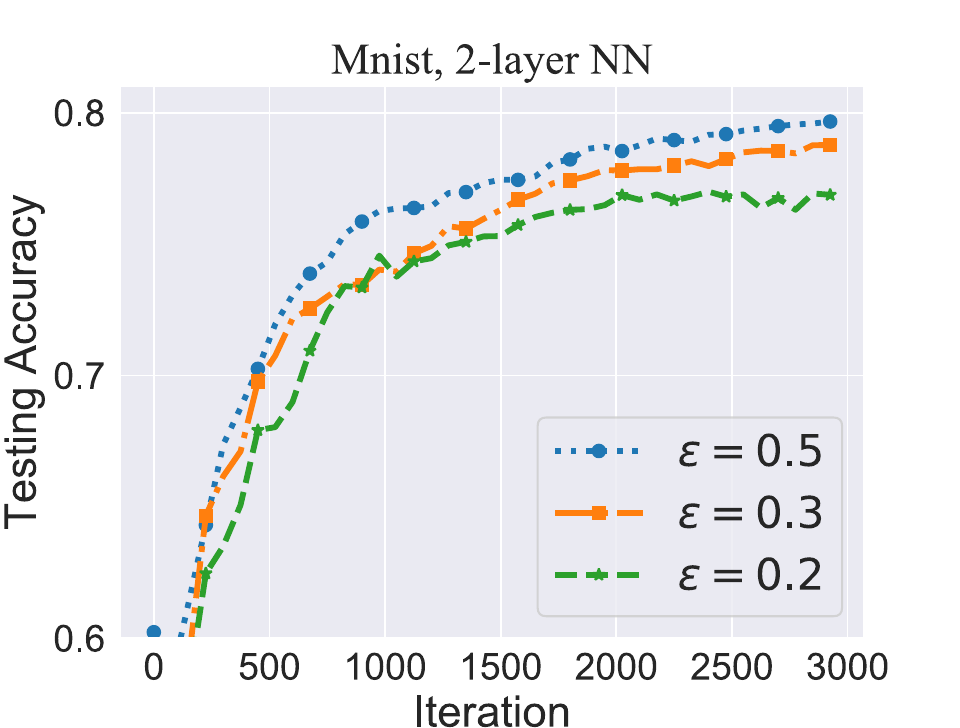}
\label{mnist_sparsify_acc_epsilons}
}
\caption{Convergence performance of our DP-CSGP using $\mathrm{rand_a}$-sparsification with different values of $\mathrm{a}$ and DP$^2$-SGD, when training 2-layer neural network on Mnist dataset under different privacy budgets.}
\label{Mnist_sparsification}
\end{figure*}

\begin{figure*}[!h]
\centering
\subfloat[$(0.5, 10^{-4})$-DP]{
\includegraphics[width=0.24\linewidth]{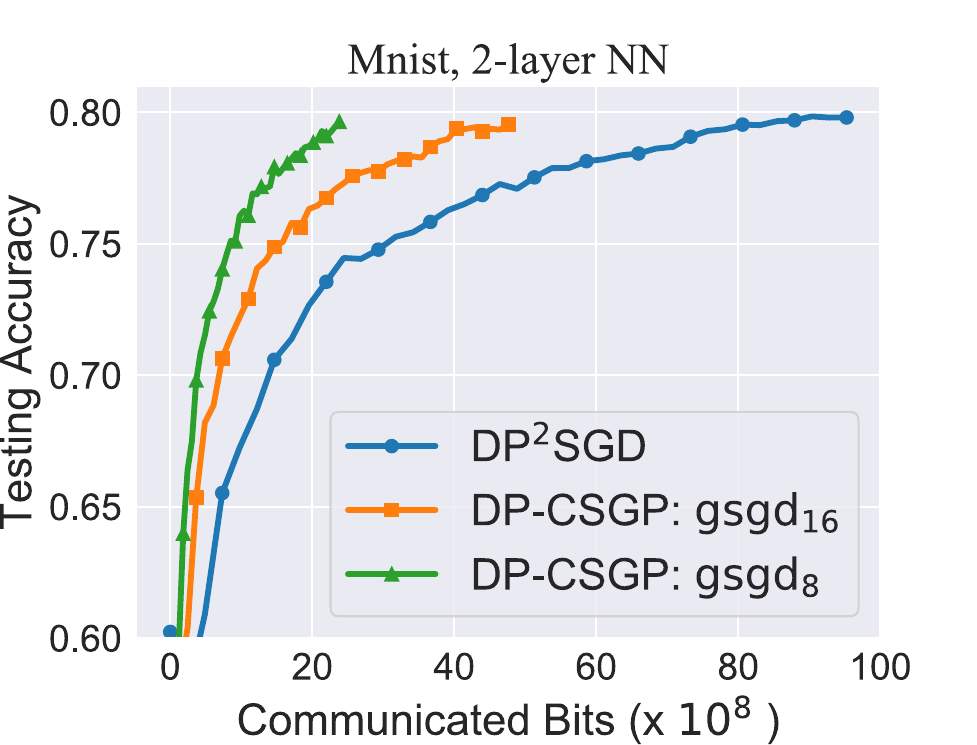}
\label{mnist_quantify_acc_epsilon_0_point_5}
}
\subfloat[$(0.3, 10^{-4})$-DP]{
\includegraphics[width=0.24\linewidth]{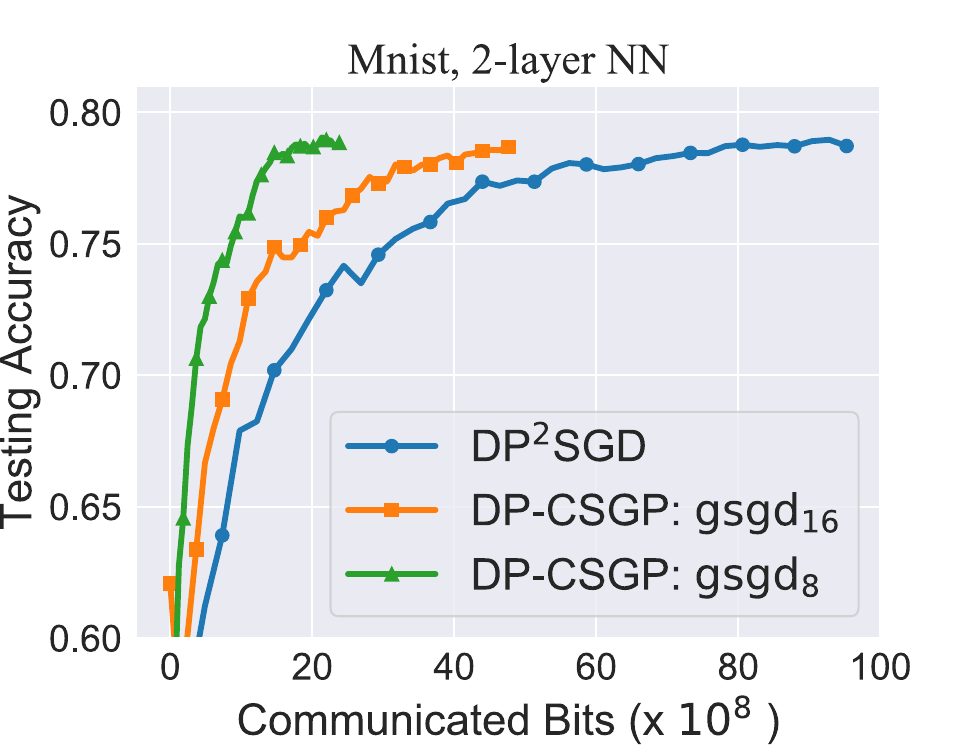}
\label{mnist_quantify_acc_epsilon_0_point_3}
}
\subfloat[$(0.2, 10^{-4})$-DP]{
\includegraphics[width=0.24\linewidth]{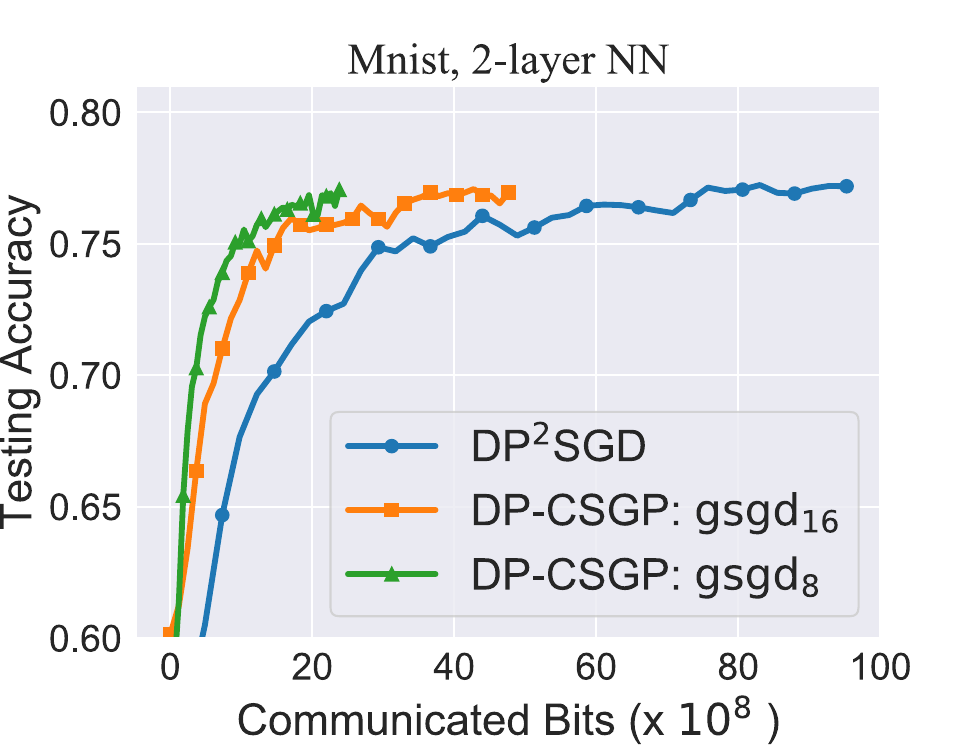}
\label{mnist_quantify_acc_epsilon_0_point_2}
}
\subfloat[DP-CSGP: gsgd$_8$]{
\includegraphics[width=0.24\linewidth]{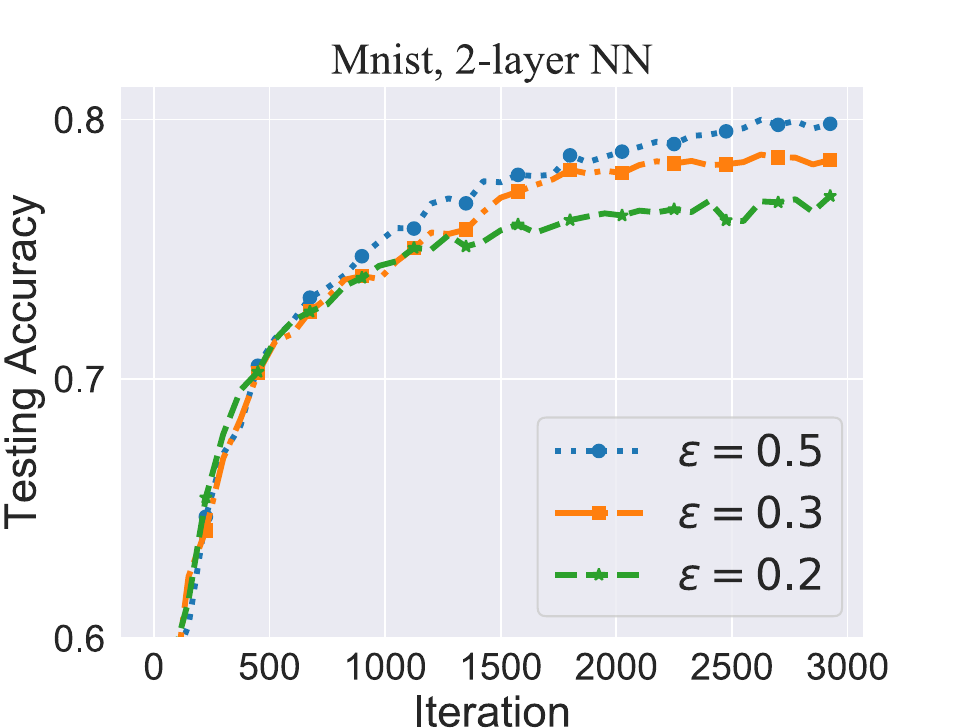}
\label{mnist_quantify_acc_epsilons}
}
\caption{Convergence performance of our DP-CSGP using $\mathrm{gsgd_b}$-quantification with different values of $\mathrm{b}$ and DP$^2$-SGD, when training 2-layer neural network on Mnist dataset under different privacy budgets.}
\label{Mnist_quantification}
\end{figure*}

\begin{figure*}[!h]
\centering
\subfloat[$(10, 10^{-4})$-DP]{
\includegraphics[width=0.242\linewidth]{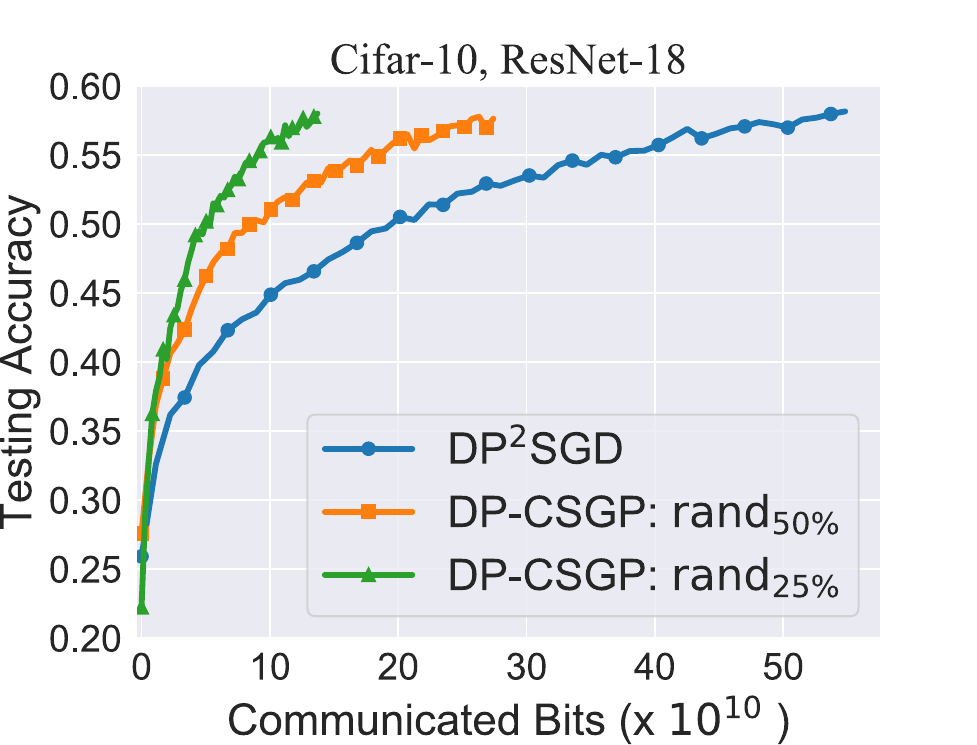}
\label{cifar_sparsify_acc_epsilon_10}
}
\subfloat[$(3, 10^{-4})$-DP]{
\includegraphics[width=0.242\linewidth]{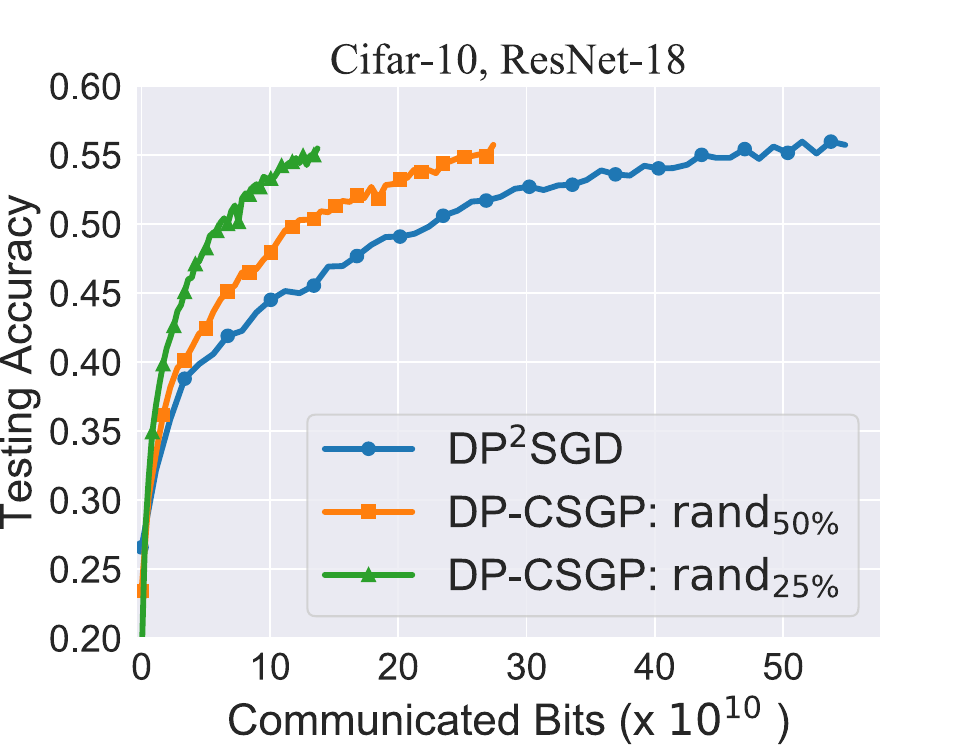}
\label{cifar_sparsify_acc_epsilon_3}
}
\subfloat[$(1, 10^{-4})$-DP]{
\includegraphics[width=0.242\linewidth]{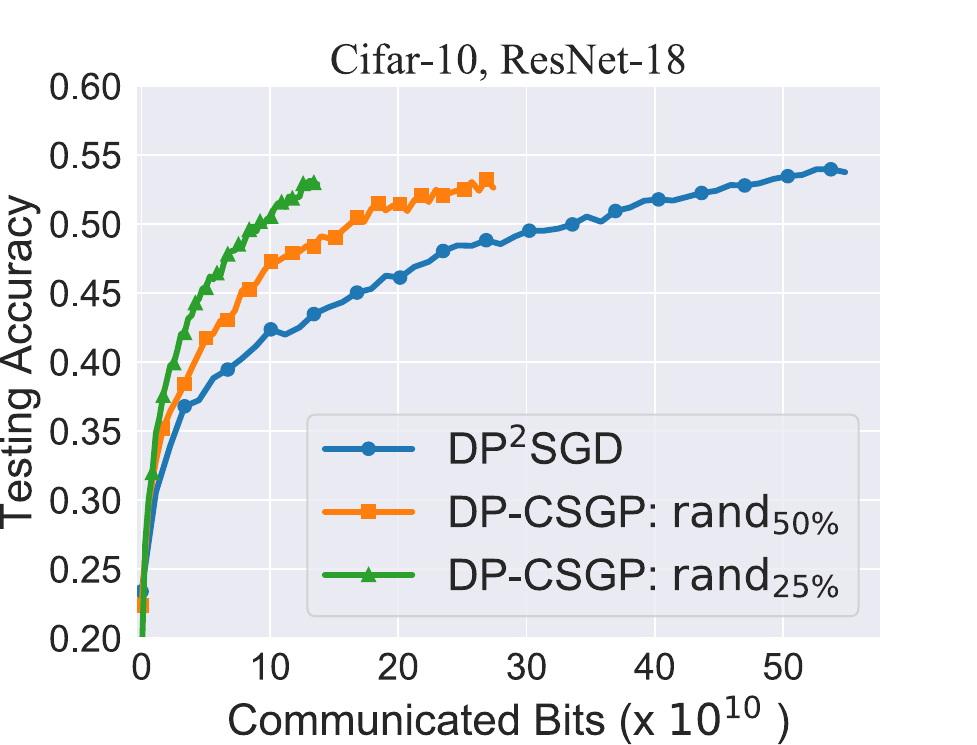}
\label{cifar_sparsify_acc_epsilon_1}
}
\subfloat[DP-CSGP: rand$_{75\%}$]{
\includegraphics[width=0.23\linewidth]{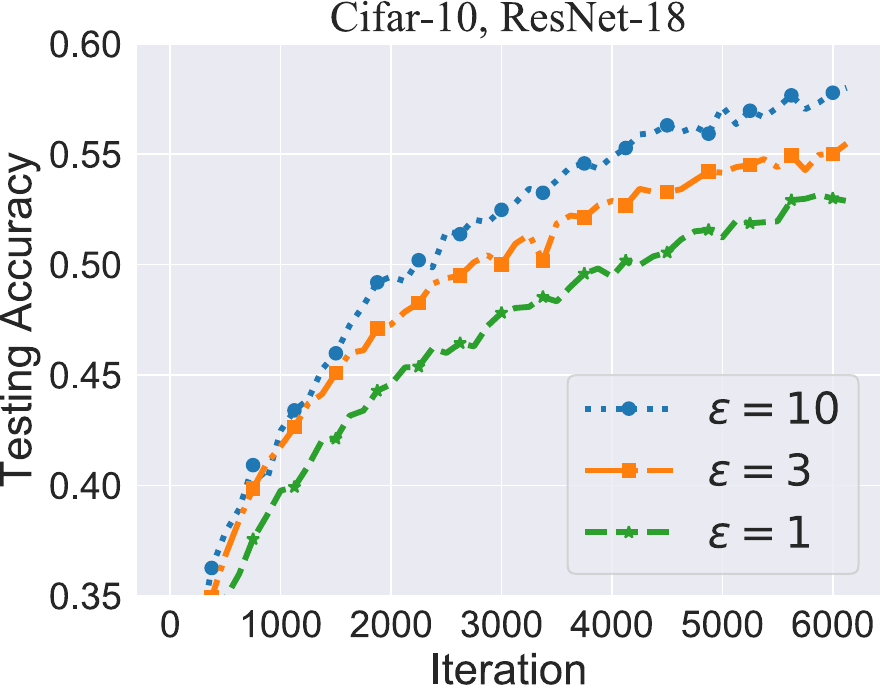}
\label{cifar_sparsify_acc_epsilons}
}
\caption{Convergence performance of our DP-CSGP using $\mathrm{rand_a}$-sparsification with different values of $\mathrm{a}$ and DP$^2$-SGD, when training ResNet-18 on Cifar-10 dataset under different privacy budgets.}
\label{Cifar_sparsification}
\end{figure*}

\begin{figure*}[!h]
\centering
\subfloat[$(10, 10^{-4})$-DP]{
\includegraphics[width=0.24\linewidth]{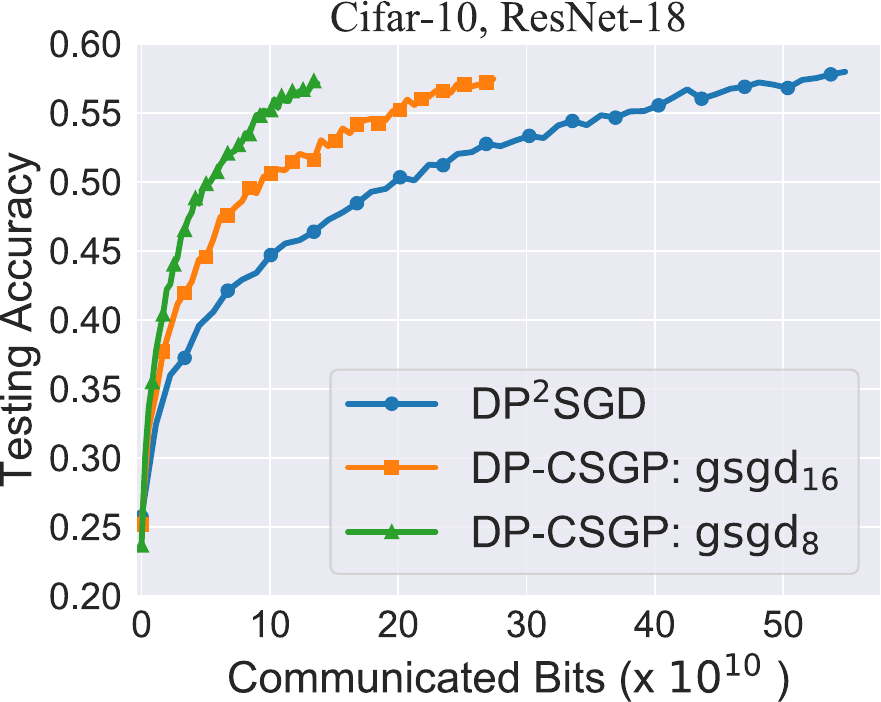}
\label{cifar_quantify_acc_epsilon_10}
}
\subfloat[$(3, 10^{-4})$-DP]{
\includegraphics[width=0.24\linewidth]{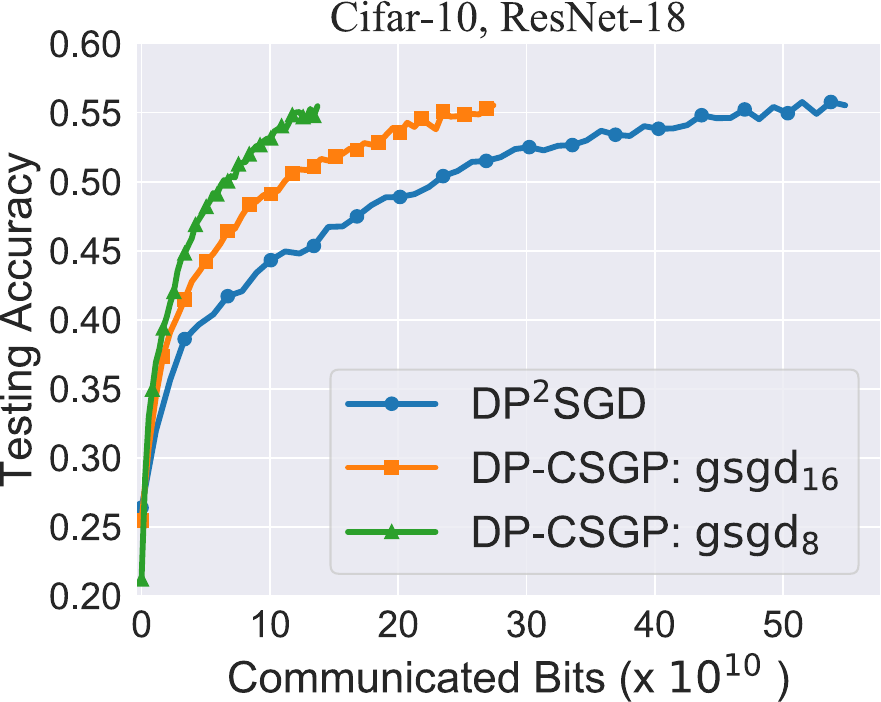}
\label{cifar_quantify_acc_epsilon_3}
}
\subfloat[$(1, 10^{-4})$-DP]{
\includegraphics[width=0.24\linewidth]{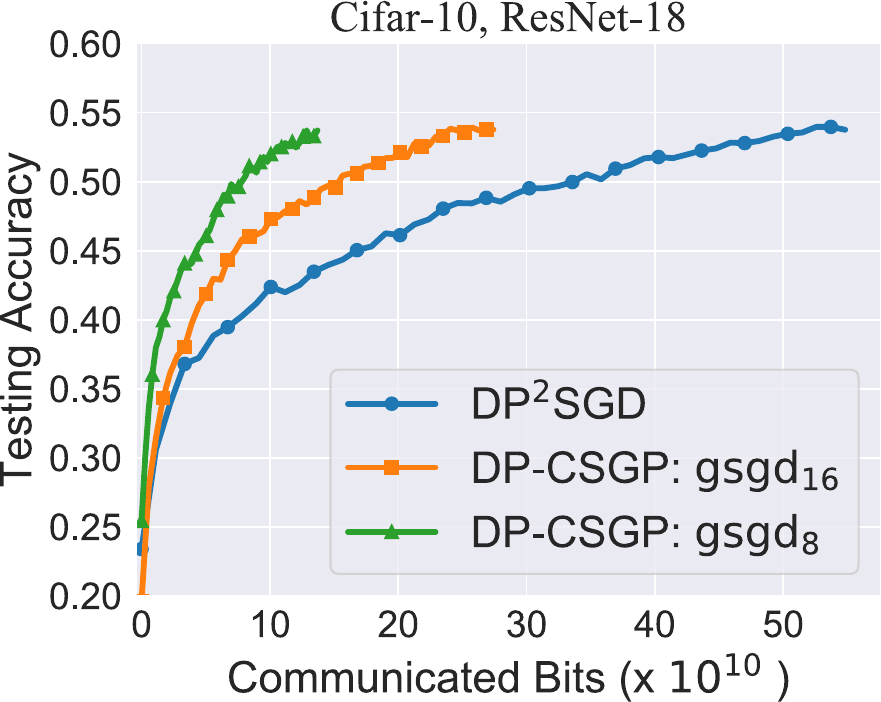}
\label{cifar_quantify_acc_epsilon_1}
}
\subfloat[DP-CSGP: gsgd$_8$]{
\includegraphics[width=0.24\linewidth]{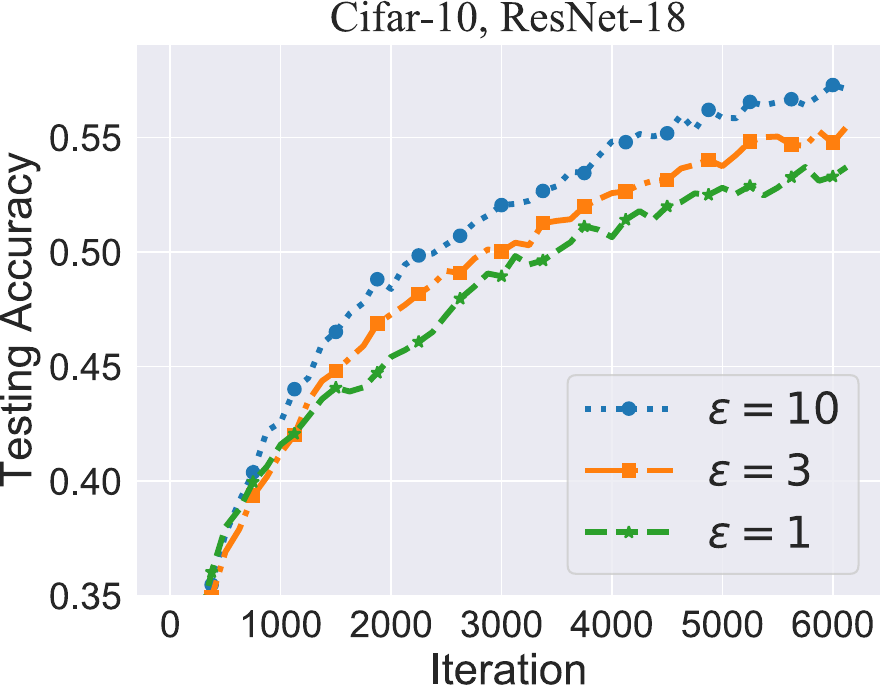}
\label{cifar_quantify_acc_epsilons}
}
\caption{Convergence performance of our DP-CSGP using $\mathrm{gsgd_b}$-quantification with different values of $\mathrm{a}$ and DP$^2$-SGD, when training ResNet-18 on Cifar-10 dataset under different privacy budgets.}
\label{Cifar_quantification}
\end{figure*}

For the ResNet-18 training task, we consider three levels of privacy with $\epsilon=10, 3, 1$ and a common $\delta=10^{-4}$. For our {\alg}, test different levels of sparsification operator ($\mathrm{rand}_{50}$, $\mathrm{rand}_{75}$) and different levels of quantification operator ($\mathrm{gsgd}_{16}$, $\mathrm{gsgd}_{8}$), where the experimental results are reported in Fig.~\ref{Cifar_sparsification} and~\ref{Cifar_quantification} respectively.
The takeaways from the experimental results are similar to previous experiments on 2-layer neural network training task. In terms of communication bits, our {\alg} again performs much better than the uncompressed counterpart DP$^2$SGD while maintaining the same level of privacy protection, again emphasizing {\alg}'s communication efficiency thanks to the employment of communication compression schemes.
Furthermore, a comparison of performance for {\alg} with different values of privacy budgets $\epsilon$ (c.f., Fig.~\ref{cifar_sparsify_acc_epsilons} and~\ref{cifar_quantify_acc_epsilons}) shows that the stronger the level of required privacy protection (i.e., the smaller the value of budget $\epsilon$), the lower the model accuracy, which again highlights the utility-privacy trade-off of {\alg}.

\section{Conclusion}
In this work, we have proposed a differentially private decentralized learning method {\alg}, which employs an error-feedback-based communication compression mechanism to improve communication efficiency. We have established a tight utility bound of $\mathcal{O}\left( \sqrt{d\log \left( \frac{1}{\delta} \right)}/(\sqrt{n}J\epsilon) \right)$ for the proposed {\alg} under mild assumptions. This utility bound matches that of differentially private decentralized learning methods with exact communication, demonstrating that our {\alg} can maintain strong model utility while ensuring both differential privacy and communication efficiency. Extensive experiments are conducted to demonstrate the superiority of our proposed algorithm, in fully decentralized settings.

\appendix
Here we supplement the proof of upper bounding the consensus error $\mathbb{E}\left[ \left\| z_{i}^{t+1}- \bar{x}^t \right\| ^2 \right]$ as below.

Based on the iterates in~\eqref{concatenate_ierate},  we can rewrite the update rule for $X^{t+1}$ as
\begin{equation*}
\begin{aligned}
X^{t+1}= & AX^t+\left( A-I \right) \left( \hat{X}^{t+1}-X^t \right)
\\
 &-\eta \cdot \left( \partial F\left( Z^{t+1};\xi ^{t+1} \right) +N^{t+1} \right) .
\end{aligned}
\end{equation*}
By repeating the above for $X^{t},...X^{1}$, we have
\begin{equation}
\label{ref_1}
\begin{aligned}
X^{t+1}=&A^tX^1+\sum_{s=0}^{t-1}{A^s \left( A-I \right) \left( \hat{X}^{t-s+1}-X^{t-s} \right)}
\\
& -\eta \cdot \sum_{s=0}^{t-1}{A^s \left( \partial F\left( Z^{t-s+1};\xi ^{t-s+1} \right) +N^{t-s+1} \right)}.
\end{aligned}
\end{equation}
Multiplying $\mathbf{1}^\top$ on both sides of~\eqref{ref_1}, yields
\begin{equation}
\begin{aligned}
\mathbf{1}^{\top}X^{t+1}= & \mathbf{1}^{\top}X^1-\eta \cdot \sum_{s=0}^{t-1}{\mathbf{1}^{\top}\partial F\left( Z^{t-s+1};\xi ^{t-s+1} \right)}
\\
& -\eta \cdot \sum_{s=0}^{t-1}{\mathbf{1}^{\top}N^{t-s+1}},
\end{aligned}
\end{equation}
where we used $\mathbf{1}^{\top}A=\mathbf{1}^{\top}
$ in Assumption~\ref{Ass_weight_matrix}.

Based on the above two inequalities, we have
\begin{equation}
\begin{aligned}
&\left\| X^{t+1}-\phi \mathbf{1}^{\top}X^t \right\| 
\\
\leqslant & \left\| \left( A^t-\phi \mathbf{1}^{\top} \right) X^1 \right\| +\eta \left\| \sum_{s=0}^{t-1}{\left( A^s-\phi \mathbf{1}^{\top} \right) N^{t-s+1}} \right\| 
\\
&+\left\| \sum_{s=0}^{t-1}{\left( A^s-\phi \mathbf{1}^{\top} \right) \left( A-I \right) \left( \hat{X}^{t-s+1}-X^{t-s} \right)} \right\| 
\\
&+\eta \left\| \sum_{s=0}^{t-1}{\left( A^s-\phi \mathbf{1}^{\top} \right) \partial F\left( Z^{t-s+1};\xi ^{t-s+1} \right)} \right\| 
\\
\leqslant & C\lambda ^t\left\| X^1 \right\| +C\sum_{s=0}^{t-1}{\lambda ^s\left\| \hat{X}^{t-s+1}-X^{t-s} \right\|}
\\
&+\eta C\sum_{s=0}^{t-1}{\lambda ^s\left\| \partial F\left( Z^{t-s+1};\xi ^{t-s+1} \right) \right\|}
\\
& +\eta C\sum_{s=0}^{t-1}{\lambda ^s\left\| N^{t-s+1} \right\|},
\end{aligned}
\end{equation}
where we used~\eqref{network_convergence} in the second inequality.

Using $\left( a+b+c+d \right) ^2\leqslant 4a^2+4b^2+4c^2+4d^2$, we have
\begin{equation}
\label{ref_2}
\begin{aligned}
&\left\| X^{t+1}-\phi \mathbf{1}^{\top}X^t \right\| ^2
\\
\leqslant & 4C^2\left( \sum_{s=0}^{t-1}{\lambda ^s\left\| \hat{X}^{t-s+1}-X^{t-s} \right\|} \right) ^2
\\
& +4\eta ^2C^2\left( \sum_{s=0}^{t-1}{\lambda ^s\left\| \partial F\left( Z^{t-s+1};\xi ^{t-s+1} \right) \right\|} \right) ^2
\\
& +4\eta ^2C^2\left( \sum_{s=0}^{t-1}{\lambda ^s\left\| N^{t-s+1} \right\|} \right) ^2+4C^2\lambda ^{2t}\left\| X^1 \right\| ^2.
\end{aligned}
\end{equation}
For the first term in the RHS of~\eqref{ref_2}, we have
\begin{equation}
\label{bound_1}
\begin{aligned}
& \left( \sum_{s=0}^{t-1}{\lambda ^s\left\| \hat{X}^{t-s+1}-X^{t-s} \right\|} \right) ^2
\\
= & \left( \sum_{s=0}^{t-1}{\lambda ^{\frac{s}{2}}\cdot \left( \lambda ^{\frac{s}{2}}\left\| \hat{X}^{t-s+1}-X^{t-s} \right\| \right)} \right) ^2
\\ 
\overset{\left( a \right)}{\leqslant} & \sum_{s=0}^{t-1}{\left( \lambda ^{\frac{s}{2}} \right) ^2}\cdot \sum_{s=0}^{t-1}{\left( \lambda ^{\frac{s}{2}}\left\| \hat{X}^{t-s+1}-X^{t-s} \right\| \right) ^2}
\\
\leqslant & \frac{1}{1-\lambda}\sum_{s=0}^{t-1}{\lambda ^s\left\| \hat{X}^{t-s+1}-X^{t-s} \right\| ^2},
\end{aligned}
\end{equation}
where we used Cauchy-Swarchz inequality in $(a)$.

Using a similar approach as above, we can bound the second term and the third term in the RHS of~\eqref{ref_2} as
\begin{equation}
\label{bound_2}
\begin{aligned}
& \left( \sum_{s=0}^{t-1}{\lambda ^s\left\| \partial F\left( Z^{t-s+1};\xi ^{t-s+1} \right) \right\|} \right) ^2
\\
\leqslant & \frac{1}{1-\lambda}\sum_{s=0}^{t-1}{\lambda ^s\left\| \partial F\left( Z^{t-s+1};\xi ^{t-s+1} \right) \right\| ^2}
\end{aligned}
\end{equation}
and 
\begin{equation}
\label{bound_3}
\left( \sum_{s=0}^{t-1}{\lambda ^s\left\| N^{t-s+1} \right\|} \right) ^2\leqslant \frac{1}{1-\lambda}\sum_{s=0}^{t-1}{\lambda ^s\left\| N^{t-s+1} \right\| ^2}.
\end{equation}
Substituting~\eqref{bound_1},~\eqref{bound_2} and~\eqref{bound_3} into~\eqref{ref_2}, and taking expectations on both sides, we have
\begin{equation}
\label{upper_bound_1}
\begin{aligned}
& \mathbb{E} \left[ \left\| X^{t+1}-\phi \mathbf{1}^{\top}X^t \right\| ^2\right]
\\
\leqslant & \frac{4C^2}{1-\lambda}\sum_{s=0}^{t-1}{\lambda ^s\mathbb{E}\left[ \left\| \hat{X}^{t-s+1}-X^{t-s} \right\| ^2 \right]}
\\
& +\frac{4\eta ^2C^2}{1-\lambda}\sum_{s=0}^{t-1}{\lambda ^s\mathbb{E}\left[ \left\| \partial F\left( Z^{t-s+1};\xi ^{t-s+1} \right) \right\| ^2 \right]}
\\
& +\frac{4\eta ^2C^2}{1-\lambda}\sum_{s=0}^{t-1}{\lambda ^s\mathbb{E}\left[ \left\| N^{t-s+1} \right\| ^2 \right]}+4C^2\lambda ^{2t}\left\| X^1 \right\| ^2
\\
\leqslant & \frac{4C^2}{1-\lambda}\sum_{s=0}^{t-1}{\lambda ^s\mathbb{E}\left[ \left\| \hat{X}^{t-s+1}-X^{t-s} \right\| ^2 \right]}
\\
& +\frac{4\eta ^2C^2n\left( G^2+ d \sigma ^2 \right)}{\left( 1-\lambda \right) ^2},
\end{aligned}
\end{equation}
where we used Assumption~\ref{assumption_initialization} and~\ref{assumption_bounded_gradient} in the last inequality.

Now, we bound $\mathbb{E}\left[ \left\| X^{t+1}-\hat{X}^{t+2} \right\| ^2 \right] $ as
\begin{equation}
\label{eq_1}
\begin{aligned}
& \mathbb{E}\left[ \left\| X^{t+1}-\hat{X}^{t+2} \right\| ^2 \right] 
\\
\overset{\eqref{update_2}}{=} &  \mathbb{E}\left[ \left\| X^{t+1}-\left( \hat{X}^{t+1}+Q^{t+1} \right) \right\| ^2 \right] 
\\
\overset{\eqref{update_1}}{=} & \mathbb{E}\left[ \left\| X^{t+1}-\hat{X}^{t+1}-Q\left( X^{t+1}-\hat{X}^{t+1} \right) \right\| ^2 \right] 
\\
\leqslant &  \omega ^2 \mathbb{E} \left[ \left\| X^{t+1}-\hat{X}^{t+1} \right\| ^2 \right] ,
\end{aligned}
\end{equation}
where we used Assumption~\ref{quantization_ratio} in the inequality.

Combining~\eqref{update_2} and~\eqref{update_6}, we have
\begin{equation}
\begin{aligned}
X^{t+1}= &X^t+\left( A-I \right) \hat{X}^{t+1}
\\
& -\eta \cdot \left( \partial F\left( Z^{t+1};\xi ^{t+1} \right) +N^{t+1} \right).
\end{aligned}
\end{equation}
Substituting the above into~\eqref{eq_1} yields
\begin{equation*}
\begin{aligned}
&\mathbb{E}\left[ \left\| X^{t+1}-\hat{X}^{t+2} \right\| ^2 \right] 
\\
\leqslant & 5\omega ^2\mathbb{E}\left[ \left\| X^t-\hat{X}^{t+1} \right\| ^2 \right] +5\omega ^2\mathbb{E}\left[ \left\| \left( A-I \right) \left( \hat{X}^{t+1}-X^t \right) \right\| ^2 \right] 
\\
&+5\omega ^2\mathbb{E}\left[ \left\| \left( A-I \right) \left( X^t-\phi \mathbf{1}^{\top}X^{t-1} \right) \right\| ^2 \right] 
\\
&+5\omega ^2\eta ^2\mathbb{E}\left[ \left\| \partial F\left( Z^{t+1};\xi ^{t+1} \right) \right\| ^2 \right] +5\omega ^2\eta ^2\mathbb{E}\left[ \left\| N^{t+1} \right\| ^2 \right] 
\\
\leqslant & 5\omega ^2\mathbb{E}\left[ \left\| X^t-\hat{X}^{t+1} \right\| ^2 \right] +5\omega ^2\left\| \left( A-I \right) \right\| ^2\mathbb{E}\left[ \left\| \hat{X}^{t+1}-X^t \right\| ^2 \right] 
\\
&+5\omega ^2\left\| \left( A-I \right) \right\| ^2\mathbb{E}\left[ \left\| X^t-\phi \mathbf{1}^{\top}X^{t-1} \right\| ^2 \right] 
\\
& +5n\omega ^2\eta ^2G^2+5n\omega ^2  \eta ^2 d \sigma ^2,
\end{aligned}
\end{equation*}
where we used Assumption~\ref{assumption_bounded_gradient} in the last inequality.

Let $\gamma \triangleq  \| A-I \|$, we have
\begin{equation}
\label{eq_3}
\begin{aligned}
& \mathbb{E}\left[ \left\| X^{t+1}-\hat{X}^{t+2} \right\| ^2 \right] 
\\
\leqslant & 5\omega ^2\left( 1+\gamma ^2 \right) \mathbb{E}\left[ \left\| X^t-\hat{X}^{t+1} \right\| ^2 \right] +5n\omega ^2\eta ^2G^2
\\
& +5\omega ^2\gamma ^2\mathbb{E}\left[ \left\| X^t-\phi \mathbf{1}^{\top}X^{t-1} \right\| ^2 \right] +5n\omega ^2\eta ^2 d \sigma ^2.
\end{aligned}
\end{equation}
Let $\rho \triangleq  \omega ^2\left( 1+\gamma ^2 \right) $, the inequality~\eqref{eq_3} can be relaxed as
\begin{equation}
\label{upper_bound_2}
\begin{aligned}
& \mathbb{E}\left[ \left\| X^{t+1}-\hat{X}^{t+2} \right\| ^2 \right] 
\\
\leqslant & 5\rho \left( \mathbb{E}\left[ \left\| X^t-\hat{X}^{t+1} \right\| ^2 \right] +n\eta ^2\left( G^2+ d \sigma ^2 \right) \right. 
\\
& +\left. \mathbb{E}\left[ \left\| X^t-\phi \mathbf{1}^{\top}X^{t-1} \right\| ^2 \right] \right).
\end{aligned}
\end{equation}
Define
\begin{equation}
R^{t+1}\triangleq \mathbb{E}\left[ \left\| X^{t+1}-\phi \mathbf{1}^{\top}X^t \right\| ^2 \right] 
\end{equation}
and
\begin{equation}
U^{t+1}\triangleq \mathbb{E}\left[ \left\| X^{t+1}-\hat{X}^{t+2} \right\| ^2 \right].
\end{equation}
Then, \eqref{upper_bound_1} and~\eqref{upper_bound_2} can be rewritten as
\begin{equation}
\label{combine_bound}
\left\{ \begin{array}{c}
	R^{t+1}\leqslant \frac{4C^2}{1-\lambda}\sum_{s=0}^{t-1}{\lambda ^sU^{t-s}}+\frac{4C^2n\eta ^2\left( G^2+ d \sigma ^2 \right)}{\left( 1-\lambda \right) ^2}\\
	U^{t+1}\leqslant 5\rho \left( U^t+R^t+n\eta ^2\left( G^2+d\sigma ^2 \right) \right)\\
\end{array} \right. .
\end{equation}
Next, we show that the quantization error (i.e., $U^t$) decays proportionately with $\eta^2$. Under Assumption~\ref{assumption_initialization}, if $\rho$ satisfies
\begin{equation}
\label{rho_upper_bound}
\rho \leqslant \left( 10+\frac{40C^2}{\left( 1-\lambda \right) ^2} \right) ^{-1},
\end{equation}
the inequalities in~\eqref{combine_bound} satisfy
\begin{equation}
\label{eq_4}
U^t\leqslant \zeta \eta ^2,~\forall t \geqslant 1
\end{equation}
where 
\begin{equation}
\label{def_zeta}
\zeta =10\rho \left( n\left( G^2+d\sigma ^2 \right) +\frac{4C^2n\left( G^2+d\sigma ^2 \right)}{\left( 1-\lambda \right) ^2} \right) .
\end{equation}
We now show that Eq.~\eqref{eq_4} holds by induction. First, we can write the inequalities in~\eqref{combine_bound} based on $U$ to obtain
\begin{equation}
\begin{aligned}
& U^{t+1}
\\
\leqslant & 5\rho \left( U^t+n\eta ^2\left( G^2+d\sigma ^2 \right) \right) 
\\
& +5\rho \left( \frac{4C^2}{1-\lambda}\sum_{s=0}^{t-2}{\lambda ^sU^{t-s-1}}+\frac{4C^2n\eta ^2\left( G^2+d\sigma ^2 \right)}{\left( 1-\lambda \right) ^2} \right).
\end{aligned}
\end{equation}
Then, supposing Eq.~\eqref{eq_4} holds, we can further obtain the following for $U^{t+1}$:
\begin{equation}
\label{eq_5}
\begin{aligned}
& U^{t+1}
\\
\leqslant & 5\rho \left( \zeta \eta ^2+n\eta ^2\left( G^2+d\sigma ^2 \right) \right) 
\\
& +5\rho \left( \frac{4C^2\zeta \eta ^2}{1-\lambda}\sum_{s=0}^{t-2}{\lambda ^s}+\frac{4C^2n\eta ^2\left( G^2+d\sigma ^2 \right)}{\left( 1-\lambda \right) ^2} \right) 
\\
\leqslant & 5\rho \zeta \eta ^2\left( 1+\frac{4C^2}{\left( 1-\lambda \right) ^2} \right) 
\\
& +5\rho \eta ^2\left( n\left( G^2+d\sigma ^2 \right) +\frac{4C^2n\left( G^2+d\sigma ^2 \right)}{\left( 1-\lambda \right) ^2} \right) .
\end{aligned}
\end{equation}
Invoking $\zeta =10\rho \left( n\left( G^2+d\sigma ^2 \right) +\frac{4C^2n\left( G^2+d\sigma ^2 \right)}{\left( 1-\lambda \right) ^2} \right) $ and $\rho \leqslant \left( 10+\frac{40C^2}{\left( 1-\lambda \right) ^2} \right) ^{-1}$ into~\eqref{eq_5}, we have
\begin{equation}
U^{t+1}\leqslant \frac{\zeta \eta ^2}{2}+\frac{\zeta \eta ^2}{2}=\zeta \eta ^2.
\end{equation}
Note that based on the iterations of the algorithm and Assumption~\ref{assumption_initialization}, we can conclude that $U^1 = 0$. Therefore, Eq.~\eqref{eq_4} holds for all $t \geqslant 1$.

Now we move on the proof of upper bounding the consensus error $\mathbb{E}\left[ \left\| z_{i}^{t+1}- \bar{x}^t \right\| ^2 \right]$. From the update rule in~\eqref{update_4}, we obtain the following for all $t \geqslant 1$:
\begin{equation}
y^{t+1}=Ay^t=A^ty^1.
\end{equation}
According to $y^1=\mathbf{1}$, it yields that
\begin{equation}
\begin{aligned}
y^{t+1} & =A^t\mathbf{1}=\left( A^t-\phi \mathbf{1}^{\top} \right) \mathbf{1}+\phi \mathbf{1}^{\top}\mathbf{1}
\\
& =\left( A^t-\phi \mathbf{1}^{\top} \right) \mathbf{1}+n\phi .
\end{aligned}
\end{equation}
Therefore, for all $i$, we have
\begin{equation}
y_{i}^{t+1}=\left[ \left( A^t-\phi \mathbf{1}^{\top} \right) \mathbf{1} \right] _i+n\phi _i.
\end{equation}
Furthermore, based on~\eqref{update_5}, the de-biased model parameter $z_i^{t+1}$ satisfies
\begin{equation*}
\begin{aligned}
& z_{i}^{t+1}=\frac{x_{i}^{t+1}}{y_{i}^{t+1}}
\\
=& \frac{\left[ \sum_{s=0}^{t-1}{A^s\left( A-I \right) \left( \hat{X}^{t-s+1}-X^{t-s} \right)} \right] _i}{\left[ \left( A^t-\phi \mathbf{1}^{\top} \right) \mathbf{1} \right] _i+n\phi _i}
\\
& -\frac{\left[ \eta \cdot \sum_{s=0}^{t-1}{A^s\left( \partial F\left( Z^{t-s+1};\xi ^{t-s+1} \right) +N^{t-s+1} \right)} \right] _i}{\left[ \left( A^t-\phi \mathbf{1}^{\top} \right) \mathbf{1} \right] _i+n\phi _i}.
\end{aligned}
\end{equation*}
Using this above relation, the vector corresponding to the consensus error of node $i$ can be obtained as
\begin{equation*}
\begin{aligned}
& z_{i}^{t+1}-\frac{\mathbf{1}^{\top}X^t}{n}
\\
= & \frac{\left[ \sum_{s=0}^{t-1}{A^s\left( A-I \right) \left( \hat{X}^{t-s+1}-X^{t-s} \right)} \right] _i}{\left[ \left( A^t-\phi \mathbf{1}^{\top} \right) \mathbf{1} \right] _i+n\phi _i}
\\
& -\frac{\left[ \eta \cdot \sum_{s=0}^{t-1}{A^s\left( \partial F\left( Z^{t-s+1};\xi ^{t-s+1} \right) +N^{t-s+1} \right)} \right] _i}{\left[ \left( A^t-\phi \mathbf{1}^{\top} \right) \mathbf{1} \right] _i+n\phi _i}
\\
& +\left( \left[ \left( A^t-\phi \mathbf{1}^{\top} \right) \mathbf{1} \right] _i+n\phi _i \right) \cdot 
\\
& \frac{\eta \cdot \mathbf{1}^{\top}\sum_{s=0}^{t-1}{\left( \partial F\left( Z^{t-s+1};\xi ^{t-s+1} \right) +N^{t-s+1} \right)}}{n\left( \left[ \left( A^t-\phi \mathbf{1}^{\top} \right) \mathbf{1} \right] _i+n\phi _i \right)}
\\
= & \frac{\sum_{s=0}^{t-1}{\left[ A^s\left( A-I \right) \right] _i\left( \hat{X}^{t-s+1}-X^{t-s} \right)}}{\left[ \left( A^t-\phi \mathbf{1}^{\top} \right) \mathbf{1} \right] _i+n\phi _i}
\\
& -\frac{\eta \sum_{s=0}^{t-1}{\left[ A^s-\phi \mathbf{1}^{\top} \right] _i\left( \partial F\left( Z^{t-s+1};\xi ^{t-s+1} \right) +N^{t-s+1} \right)}}{\left[ \left( A^t-\phi \mathbf{1}^{\top} \right) \mathbf{1} \right] _i+n\phi _i}
\\
& +\eta \left[ \left( A^t-\phi \mathbf{1}^{\top} \right) \mathbf{1} \right] _i\cdot 
\\
& \frac{\sum_{s=0}^{t-1}{\left( \partial F\left( Z^{t-s+1};\xi ^{t-s+1} \right) +N^{t-s+1} \right)}}{n\left( \left[ \left( A^t-\phi \mathbf{1}^{\top} \right) \mathbf{1} \right] _i+n\phi _i \right)}.
\end{aligned}
\end{equation*}
Note that by Proposition~\ref{Pro}, we have $\left[ \left( A^t-\phi \mathbf{1}^{\top} \right) \mathbf{1} \right] _i+n\phi _i=\left[ A^t\mathbf{1} \right] _i\geqslant \beta $ for all $t \geqslant 1$, which yields the following for squared norm of consensus error:
\begin{equation}
\label{eq_6}
\begin{aligned}
& \mathbb{E}\left[ \left\| z_{i}^{t+1}-\frac{\mathbf{1}^{\top}X^t}{n} \right\| ^2 \right] 
\\
& \leqslant  \frac{5}{\beta ^2}\mathbb{E}\left[ \left\| \sum_{s=0}^{t-1}{\left[ A^s\left( A-I \right) \right] _i\left( \hat{X}^{t-s+1}-X^{t-s} \right)} \right\| ^2 \right] 
\\
& +\frac{5\eta ^2}{\beta ^2}\mathbb{E}\left[ \left\| \sum_{s=0}^{t-1}{\left[ A^s-\phi \mathbf{1}^{\top} \right] _i\cdot \partial F\left( Z^{t-s+1};\xi ^{t-s+1} \right)} \right\| ^2 \right] 
\\
& +\frac{5\eta ^2}{\beta ^2}\mathbb{E}\left[ \left\| \sum_{s=0}^{t-1}{\left[ A^s-\phi \mathbf{1}^{\top} \right] _i\cdot N^{t-s+1}} \right\| ^2 \right] 
\\
& +\frac{5\eta ^2}{n^2\beta ^2}\mathbb{E}\left[ \left\| \mathbf{1}^{\top}\sum_{s=0}^{t-1}{\left[ \left( A^t-\phi \mathbf{1}^{\top} \right) \mathbf{1} \right] _i\partial F\left( Z^{t-s+1};\xi ^{t-s+1} \right)} \right\| ^2 \right] 
\\
& +\frac{5\eta ^2}{n^2\beta ^2}\mathbb{E}\left[ \left\| \mathbf{1}^{\top}\sum_{s=0}^{t-1}{\left[ \left( A^t-\phi \mathbf{1}^{\top} \right) \mathbf{1} \right] _i\cdot N^{t-s+1}} \right\| ^2 \right] .
\end{aligned}
\end{equation}
By expanding the first term in the RHS of~\eqref{eq_6}, we have
\begin{equation*}
\begin{aligned}
& \left\| \sum_{s=0}^{t-1}{\left[ A^s\left( A-I \right) \right] _i\left( \hat{X}^{t-s+1}-X^{t-s} \right)} \right\| ^2
\\
= & \sum_{s=0}^{t-1}{\left\| \left[ A^s\left( A-I \right) \right] _i\left( \hat{X}^{t-s+1}-X^{t-s} \right) \right\| ^2}
\\
&  +\sum_{s\ne s^{\prime}}^{t-1}{\left< \left[ A^s\left( A-I \right) \right] _i\left( \hat{X}^{t-s+1}-X^{t-s} \right) , \right.}
\\
&  \left. \left[ A^{s^{\prime}}\left( A-I \right) \right] _i\left( \hat{X}^{t-s^{\prime}+1}-X^{t-s^{\prime}} \right) \right> 
\\
\leqslant &  \sum_{s=0}^{t-1}{\left\| \left[ A^s\left( A-I \right) \right] _i \right\| ^2\left\| \hat{X}^{t-s+1}-X^{t-s} \right\| ^2}
\\
&  +\sum_{s\ne s^{\prime}}^{t-1}{\left\| \left[ A^s\left( A-I \right) \right] _i \right\| \left\| \hat{X}^{t-s+1}-X^{t-s} \right\| \cdot}
\\
&  \left\| \left[ A^{s^{\prime}}\left( A-I \right) \right] _i \right\| \left\| \hat{X}^{t-s^{\prime}+1}-X^{t-s^{\prime}} \right\| .
\end{aligned}
\end{equation*}
Using the fact that $x\cdot y\leqslant \frac{x^2}{2}+\frac{y^2}{2}$ for all $x,y\in \mathbb{R}$, the above inequality reduces to
\begin{equation*}
\begin{aligned}
& \left\| \sum_{s=0}^{t-1}{\left[ A^s\left( A-I \right) \right] _i\left( \hat{X}^{t-s+1}-X^{t-s} \right)} \right\| ^2
\\
\leqslant & \sum_{s=0}^{t-1}{\left\| \left[ A^s\left( A-I \right) \right] _i \right\| ^2\left\| \hat{X}^{t-s+1}-X^{t-s} \right\| ^2}
\\
& +\frac{1}{2}\sum_{s\ne s^{\prime}}^{t-1}{\left\| \left[ A^s\left( A-I \right) \right] _i \right\| \left\| \left[ A^{s^{\prime}}\left( A-I \right) \right] _i \right\| \cdot}
\\
& \left( \left\| \hat{X}^{t-s+1}-X^{t-s} \right\| ^2 + \left\| \hat{X}^{t-s^{\prime}+1}-X^{t-s^{\prime}} \right\| ^2 \right) .
\end{aligned}
\end{equation*}
Next we use Proposition~\ref{Pro} to further obtain
\begin{equation*}
\begin{aligned}
& \left\| \sum_{s=0}^{t-1}{\left[ A^s\left( A-I \right) \right] _i\left( \hat{X}^{t-s+1}-X^{t-s} \right)} \right\| ^2
\\
\leqslant &  C^2\sum_{s=0}^{t-1}{\lambda ^{2s}\left\| \hat{X}^{t-s+1}-X^{t-s} \right\| ^2}
\\
&  +C^2\sum_{s\ne s^{\prime}}^{t-1}{\lambda ^{s+s^{\prime}}\left\| \hat{X}^{t-s+1}-X^{t-s} \right\| ^2}
\\
\leqslant &  C^2\sum_{s=0}^{t-1}{\left( \lambda ^{2s}+\frac{\lambda ^s}{1-\lambda} \right) \left\| \hat{X}^{t-s+1}-X^{t-s} \right\| ^2}
\\
\leqslant &  \frac{2C^2}{1-\lambda}\sum_{s=0}^{t-1}{\lambda ^s\left\| \hat{X}^{t-s+1}-X^{t-s} \right\| ^2},
\end{aligned}
\end{equation*}
where we used $\lambda ^{2s}\leqslant \frac{\lambda ^s}{1-\lambda}$ in the last inequality. Using the same approach, we can derive the upper bound for the second term and the third term in the RHS of~\eqref{eq_6}, as follows
\begin{equation*}
\begin{aligned}
& \mathbb{E}\left[ \left\| \sum_{s=0}^{t-1}{\left[ A^s-\phi \mathbf{1}^{\top} \right] _i\cdot \partial F\left( Z^{t-s+1};\xi ^{t-s+1} \right)} \right\| ^2 \right] 
\\
\leqslant & \frac{2C^2}{1-\lambda}\sum_{s=0}^{t-1}{\lambda ^s\mathbb{E}\left[ \left\| \partial F\left( Z^{t-s+1};\xi ^{t-s+1} \right) \right\| ^2 \right]}. 
\end{aligned}
\end{equation*}
and
\begin{equation*}
\begin{aligned}
& \mathbb{E}\left[ \left\| \sum_{s=0}^{t-1}{\left[ A^s-\phi \mathbf{1}^{\top} \right] _i\cdot N^{t-s+1}} \right\| ^2 \right] 
\\
\leqslant & \frac{2C^2}{1-\lambda}\sum_{s=0}^{t-1}{\lambda ^s\mathbb{E}\left[ \left\| N^{t-s+1} \right\| ^2 \right]} .
\end{aligned}
\end{equation*}
To bound the last two terms in the RHS of~\eqref{eq_6}, we use the same method, as well as the fact that $\left\| A^t-\phi \mathbf{1}^{\top} \right\| \leqslant C\lambda ^t\leqslant C\lambda ^s$ for all $s \leqslant t$ to deduce that
\begin{equation*}
\begin{aligned}
& \mathbb{E}\left[ \left\| \mathbf{1}^{\top}\sum_{s=0}^{t-1}{\left[ \left( A^t-\phi \mathbf{1}^{\top} \right) \mathbf{1} \right] _i\partial F\left( Z^{t-s+1};\xi ^{t-s+1} \right)} \right\| ^2 \right] 
\\
\leqslant & \frac{2n^2C^2}{1-\lambda}\sum_{s=0}^{t-1}{\lambda ^s\mathbb{E}\left[ \left\| \partial F\left( Z^{t-s+1};\xi ^{t-s+1} \right) \right\| ^2 \right]}
\end{aligned}
\end{equation*}
and
\begin{equation*}
\begin{aligned}
& \mathbb{E}\left[ \left\| \mathbf{1}^{\top}\sum_{s=0}^{t-1}{\left[ \left( A^t-\phi \mathbf{1}^{\top} \right) \mathbf{1} \right] _i\cdot N^{t-s+1}} \right\| ^2 \right] 
\\
\leqslant & \frac{2n^2C^2}{1-\lambda}\sum_{s=0}^{t-1}{\lambda ^s\mathbb{E}\left[ \left\| N^{t-s+1} \right\| ^2 \right]}
\end{aligned}
\end{equation*}
Substituting the above upper bounds into~\eqref{eq_6}, we have
\begin{equation*}
\begin{aligned}
& \mathbb{E}\left[ \left\| z_{i}^{t+1}-\frac{\mathbf{1}^{\top}X^t}{n} \right\| ^2 \right] 
\\
\leqslant &  \frac{10C^2}{\beta ^2\left( 1-\lambda \right)}\sum_{s=0}^{t-1}{\lambda ^s\mathbb{E}\left[ \left\| \hat{X}^{t-s+1}-X^{t-s} \right\| ^2 \right]}
\\
& +\frac{20\eta ^2C^2}{\beta ^2\left( 1-\lambda \right)}\sum_{s=0}^{t-1}{\lambda ^s\mathbb{E}\left[ \left\| \partial F\left( Z^{t-s+1};\xi ^{t-s+1} \right) \right\| ^2 \right]}
\\
& +\frac{20\eta ^2C^2}{\beta ^2\left( 1-\lambda \right)}\sum_{s=0}^{t-1}{\lambda ^s\mathbb{E}\left[ \left\| N^{t-s+1} \right\| ^2 \right]}
\end{aligned}
\end{equation*}
Noticing that $U^{t-s}=\mathbb{E}\left[ \left\| X^{t-s}-\hat{X}^{t-s+1} \right\| ^2 \right] \leqslant \zeta \eta ^2$ by~\eqref{eq_4}, $\mathbb{E}\left[ \left\| \partial F\left( Z^{t-s+1};\xi ^{t-s+1} \right) \right\| ^2 \right] \leqslant nG^2$ and $\mathbb{E}\left[ \left\| N^{t-s+1} \right\| ^2 \right] \leqslant n d\sigma ^2$, we further have
\begin{equation}
\label{consensus_error_bound}
\begin{aligned}
& \mathbb{E}\left[ \left\| z_{i}^{t+1}-\frac{\mathbf{1}^{\top}X^t}{n} \right\| ^2 \right] 
\\
\leqslant & \frac{10\eta ^2C^2}{\beta ^2\left( 1-\lambda \right) ^2}\left[ 2n\left( G^2+ d\sigma ^2 \right) +\zeta \right] ,
\end{aligned}
\end{equation}
which completes the proof.

\bibliographystyle{IEEEtran}
\bibliography{reference}	
\end{document}